% CVPR 2026 Paper Template; see https://github.com/cvpr-org/author-kit

\documentclass[10pt,twocolumn,letterpaper]{article}

%%%%%%%%% PAPER TYPE  - PLEASE UPDATE FOR FINAL VERSION
% \usepackage{cvpr}              % To produce the CAMERA-READY version
% \usepackage[review]{cvpr}      % To produce the REVIEW version
\usepackage[pagenumbers]{cvpr} % To force page numbers, e.g. for an arXiv version

%%%%% NEW MATH DEFINITIONS %%%%%

\usepackage{amsmath,amsfonts,bm}

% Mark sections of captions for referring to divisions of figures

% Highlight a newly defined term

% Figure reference, lower-case.

% Figure reference, capital. For start of sentence

% Section reference, lower-case.

% Section reference, capital.

% Reference to two sections.

% Reference to three sections.

% % Reference to an equation, lower-case.
% \def\eqref#1{equation~\ref{#1}}
% % Reference to an equation, upper case
% \def\Eqref#1{Equation~\ref{#1}}
% A raw reference to an equation---avoid using if possible

% Reference to a chapter, lower-case.

% Reference to an equation, upper case.

% Reference to a range of chapters

% Reference to an algorithm, lower-case.

% Reference to an algorithm, upper case.

% Reference to a part, lower case

% Reference to a part, upper case

\def\1{\bm{1}}

% Random variables

\def\rc{{\textnormal{c}}}

% rm is already a command, just don't name any random variables m

% Random vectors

\def\rvx{{\mathbf{x}}}
\def\rvy{{\mathbf{y}}}

% Elements of random vectors

% Random matrices

% Elements of random matrices

% Vectors

% Elements of vectors

% Matrix

% Tensor
\DeclareMathAlphabet{\mathsfit}{\encodingdefault}{\sfdefault}{m}{sl}
\SetMathAlphabet{\mathsfit}{bold}{\encodingdefault}{\sfdefault}{bx}{n}

% Graph

% Sets

% Don't use a set called E, because this would be the same as our symbol
% for expectation.

% Entries of a matrix

% entries of a tensor
% Same font as tensor, without \bm wrapper

% The true underlying data generating distribution

% The empirical distribution defined by the training set

% The model distribution

% Stochastic autoencoder distributions

 % Laplace distribution

\newcommand{\E}{\mathbb{E}}
\newcommand{\Ls}{\mathcal{L}}
\newcommand{\R}{\mathbb{R}}

\definecolor{cvprblue}{rgb}{0.21,0.49,0.74}
\usepackage[pagebackref,breaklinks,colorlinks,allcolors=cvprblue]{hyperref}
\usepackage{amsthm}
\usepackage{amsmath}
\usepackage{booktabs}
\usepackage{multirow}
\usepackage{xspace}
\usepackage{subcaption}

\usepackage[table]{xcolor}
\usepackage{algorithm}
\usepackage{multicol}
\usepackage{algpseudocode}
\usepackage{mathtools}

\algrenewcommand{\algorithmiccomment}[1]{\hfill\textcolor{green!50!black}{$\triangleright$ #1}}
\def\NoNumber#1{{\def\alglinenumber##1{}\State #1}\addtocounter{ALG@line}{-1}}

%%%%%%%%% PAPER ID  - PLEASE UPDATE
 % *** Enter the Paper ID here

%%%%%%%%% TITLE - PLEASE UPDATE
\title{Advancing Image Classification with Discrete Diffusion Classification Modeling}

%%%%%%%%% AUTHORS - PLEASE UPDATE
\author{
Omer Belhasin, Shelly Golan, Ran El-Yaniv, and Michael Elad\\
Technion - Israel Institute of Technology\\
{\tt\small \{omer.be,shelly.golan,rani,elad\}@cs.technion.ac.il}
}

% \author{
% Omer Belhasin\\
% Technion, NVIDIA\\
% {\tt\small omer.be@cs.technion.ac.il}
% % For a paper whose authors are all at the same institution,
% % omit the following lines up until the closing ``}''.
% % Additional authors and addresses can be added with ``\and'',
% % just like the second author.
% % To save space, use either the email address or home page, not both
% \and
% Shelly Golan\\
% Technion\\
% First line of institution2 address\\
% {\tt\small secondauthor@i2.org}
% \and
% Second Author\\
% Institution2\\
% First line of institution2 address\\
% {\tt\small secondauthor@i2.org}
% \and
% Second Author\\
% Institution2\\
% First line of institution2 address\\
% {\tt\small secondauthor@i2.org}
% }

\begin{document}
\maketitle

\begin{abstract}
Image classification is a well-studied task in computer vision,
and yet it remains challenging under high-uncertainty conditions, such as when input images are corrupted or training data are limited.
Conventional classification approaches typically train models to directly predict class labels from input images, but this might lead to suboptimal performance in such scenarios.
To address this issue, we propose Discrete Diffusion Classification Modeling (DiDiCM), a novel framework that leverages a diffusion-based procedure to model the posterior distribution of class labels conditioned on the input image.
DiDiCM supports diffusion-based predictions either on class probabilities or on discrete class labels, providing flexibility in computation and memory trade-offs.
We conduct a comprehensive empirical study demonstrating the superior performance of DiDiCM over standard classifiers, showing that a few diffusion iterations achieve higher classification accuracy on the ImageNet dataset compared to baselines, with accuracy gains increasing as the task becomes more challenging. We release our code at \url{https://github.com/omerb01/didicm}.
\end{abstract}

\section{Introduction}
\label{sec:intro}

\begin{figure}[t]
    \centering
    \includegraphics[width=0.8\columnwidth]{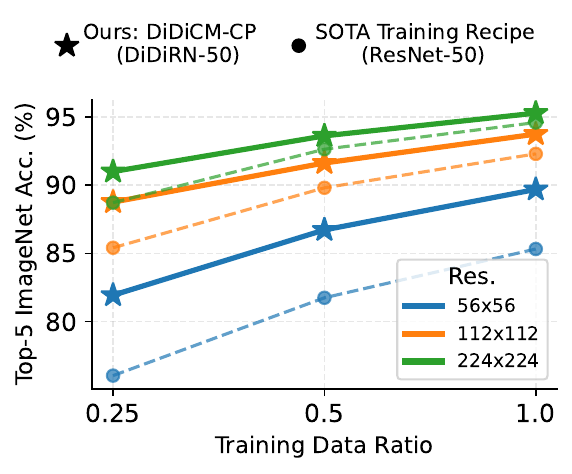}
    \caption{\textbf{ImageNet Top-5 Accuracy:} DiDiCM vs. standard classifiers. DiDiRN-50 (comparable to ResNet-50) and ResNet-50 are both trained using the state-of-the-art recipe \cite{wightman2021resnet}. DiDiCM shows superior top-5 accuracy across all uncertainty settings.}
    \label{fig:main_figure}
\end{figure}

Image classification has long been a central problem in computer vision and has driven major advances in deep learning \citep{lecun2002gradient,krizhevsky2012imagenet,russakovsky2015imagenet,he2015delving,he2016deep,dosovitskiy2020image}. During the past decade, this task has been approached through various model architectures \cite{tan2019efficientnet,radford2021learning,ding2022davit,yu2022coca,woo2023convnext} and training paradigms \cite{hinton2015distilling,chen2020simple,grill2020bootstrap,wightman2021resnet,he2022masked}. Despite these advances, classification remains challenging in real-world scenarios characterized by high uncertainty, where data are noisy, incomplete, or limited, such as in medical imaging \citep{tajbakhsh2020embracing,zhou2021review}, autonomous driving \citep{grigorescu2020survey,caesar2020nuscenes}, and other domains \cite{zhu2017deep,norouzzadeh2018automatically}. These conditions motivate the development of alternative modeling strategies that are capable of explicitly handling uncertainty in the data.

Uncertainty in classification is a fundamental challenge that has been defined and addressed in various ways \citep{lakshminarayanan2017simple,geifman2017selective,goan2020bayesian,romano2020classification}.
A common thread among these approaches is Bayesian modeling of the posterior distribution of class labels conditioned on an input image, which captures the confidence and ambiguity of the model across class labels.

Yet, how uncertainty propagates throughout the learning process remains an open and intriguing question.
Traditional classifiers are typically trained to predict class labels using the standard cross-entropy loss.
However, this approach introduces stochastic biases and degrades performance, particularly in high-uncertainty settings such as corrupted images or low-data regimes \cite{clarte2023theoretical}.
A more desirable approach would refer to the given data as stochastic samples from the target distribution, and optimize over it accordingly.% a distribution of the given data % rather than individual samples. 

Recently, diffusion-based generative models have revolutionized image synthesis \cite{sohl2015deep,song2019generative,ho2020denoising}, surpassing the state-of-the-art performance previously achieved by GANs \cite{dhariwal2021diffusion}.
These models have shown a remarkable ability to model complex data distributions in continuous spaces.
Motivated by their success, we explore whether diffusion mechanisms can be adapted to the discrete domain to enhance image classification performance.

To this end, we introduce \emph{Discrete Diffusion Classification Modeling} (DiDiCM), a novel diffusion framework that models the posterior distribution of class labels conditioned on input images.
Our approach simulates the reverse diffusion process by taking advantage of the tractability of class labels, operating either on class probabilities or on discrete class labels, thereby offering flexibility in computation and memory trade-offs.
At its core, the model predicts the Concrete Score \citep{meng2022concrete,lou2023discrete}, a generalization of the standard continuous score function \cite{song2019generative}, enabling diffusion directly in the discrete label space.

To the best of our knowledge, DiDiCM is the first discrete diffusion-based framework specifically designed for classification. In contrast to previous approaches that adapt diffusion models originally developed for image generation to classification tasks \cite{zimmermann2021score,li2023your,guo2023egc,daultani2024diffusion,yadin2024classification}, our method directly approximates the target posterior distribution over labels. Existing works on discrete diffusion processes have primarily focused on the language domain and suffer from significant tractability challenges, limitations that our framework effectively overcomes to achieve improved performance.

To examine the suggested diffusion framework, we propose the \emph{Discrete Diffusion Residual Network} (DiDiRN), a DiDiCM-compatible classification architecture extending the well-known ResNet \cite{he2016deep}.
DiDiRN preserves the core convolutional structure of ResNet while introducing additional modules that enable the processing of the current diffusion step.
This design allows for a direct and fair comparison with standard ResNet architectures.

We conduct extensive experiments on the ImageNet dataset under controlled uncertainty settings, involving varying levels of image corruption and data scarcity. Our training follows the state-of-the-art methodology proposed by \citet{wightman2021resnet}, a comprehensive recipe that integrates diverse data augmentations and optimization techniques to enhance model performance, thereby establishing a strong and competitive baseline. In Figure~\ref{fig:main_figure}, we present the top-5 accuracy results on the ImageNet dataset, demonstrating that DiDiCM, when applied with DiDiRN, consistently outperforms ResNet classifiers, achieving higher accuracy with only a few diffusion steps. Moreover, the performance gap widens as uncertainty increases,
highlighting DiDiCM’s success in approximating the posterior distribution compared to standard classification.
% highlighting DiDiCM’s robustness in approximating the posterior distribution of class labels under challenging conditions.

In summary, the following are the four main contributions of this paper:  
\begin{enumerate}
\item We propose a novel diffusion-based classification framework that defines the forward and reverse processes along with the corresponding training objective; 
\item We introduce two approaches to simulate the reverse process, either via diffusion of class probabilities or discrete class labels, offering flexibility in the computational complexity versus memory tradeoff;
\item We develop a diffusion-based classification architecture built on ResNet, enabling fair comparison with well-established ResNet performance in the literature;
and 
\item We provide a thorough empirical analysis of our method under varying uncertainty levels on the challenging ImageNet dataset.
\end{enumerate}

\section{Problem Formulation}
\label{sec:formulation}

Let $\rvx$ be a random vector, and let $\rvy = h(\rvx)$ be an observation of $\rvx$, where $h$ is an unknown, possibly stochastic and non-invertible function.
For example, $\rvx$ could represent a clear image, while $\rvy$ could be its low-resolution version.
Note that $h$ may also be the identity function, in which case $\rvy = \rvx$.
Let $\rc \in \{ 1, \dots, K \}$ be a random class label assigned to $\rvx$, where $\rc$ lives in a finite discrete space of $K$ class labels.
In this paper, our goal is to model $P(\rc|\rvy) \in \R^K$ with a parameterized model $p_\theta(\rc|\rvy) \approx P(\rc|\rvy)$.

One might be tempted to adopt a common yet naive approach to the classification task by training a model $f_\theta(\rvy) \in \R^K$ to predict the class label from the corrupted input using the standard cross-entropy loss, $\ell_\text{CE}(\rvy,\rc) := -\log f_\theta(\rvy)_{\rc}$.
However, this approach % remains naive, as it 
disregards the inherent uncertainty by the transformation from $\rvx$ to $\rvy$.
In particular, it optimizes $\ell_\text{CE}(\rvy,\rc)$ over $\rc \sim P(\rc|\rvx)$ rather than $\rc \sim P(\rc|\rvy)$, thereby introducing unavoidable stochastic biases into the optimization process and leading to suboptimal performance.
This problem becomes even more prominent when training data is limited \cite{clarte2023theoretical}, which is common in various classification domains where class labels might be rare.

In an effort to address the classification challenge in high-uncertainty settings, we propose an alternative classification framework based on a discrete diffusion process for modeling the distribution of class labels given an input image. We start by reviewing existing and related work in the next section, and then dive into the proposed scheme.

% Our approach is motivated by the success of diffusion-based generative methods, which produce higher-quality samples by iteratively refining the image, in comparison to single-pass generation methods (e.g., GANs).
% Similarly, we show hereafter that our discrete diffusion approach leads to better classification performance when compared to standard single-pass classification models.

\section{Related Work}
\label{sec:related_work}

\textbf{Discrete Diffusion Models}. ~
Most discrete diffusion models have been proposed for the language domain and follow the framework introduced by D3PM \cite{austin2021structured}, which employs a discrete-time formulation.
In this framework, the model is trained to denoise a noisy instance back to its clean form using a weighted cross-entropy loss.
As discussed in \cite{clarte2023theoretical}, this approach inherits the same limitations as conventional classifiers in the classification domain.
A generalization of D3PM is SEDD \cite{lou2023discrete}, a discrete diffusion language model that applies score matching \cite{song2019generative} through a continuous-time discrete diffusion framework \cite{campbell2022continuous}. SEDD builds upon the Concrete Score \cite{meng2022concrete}, which extends the standard score function in continuous diffusion modeling \citep{song2019generative}.
In our paper, we build upon the principles of SEDD and adapt them to the classification domain.
By leveraging the tractability of class labels, we reformulate the diffusion processes into an efficient and high-performing discrete diffusion framework specifically tailored for classification tasks.

\textbf{Diffusion-based Classification}. ~
To the best of our knowledge, this work is the first to apply a discrete diffusion approach to image classification.  
Previous studies on diffusion-based classification have adapted diffusion models originally developed for image generation tasks \cite{zimmermann2021score,li2023your,guo2023egc,daultani2024diffusion,yadin2024classification} to serve classification objectives.  
These methods typically demand extensive training datasets and high amount of computational resources for both training and inference.  
In contrast, our method operates directly in the class label domain; it is more analytically justified, and it offers a more efficient diffusion-based classification framework.

\textbf{Uncertainty-Aware Classification}. ~
Prior research on uncertainty in classification tasks has primarily addressed uncertainty arising from (1) data-related limitations, (2) the inherent ambiguity of the classification task, or (3) their combination \cite{einbinder2022training,jones2022direct,zhou2022survey,dawood2023uncertainty,sale2024label,belhasin2024uncertainty}. Recently, \citet{belhasin2024uncertainty} introduced an uncertainty-aware, diffusion-based approach for handling  inverse problems in cardiovascular diagnosis. This method was shown to achieve consistent performance gains in diagnosis, through the analytically grounded \emph{Expected Score Classifier} (ESC), which models the posterior distribution of class labels given degraded inputs. The ESC reconstructs clean signals before classification, effectively averaging predictions across degraded observations. Building on this idea, our work extends uncertainty modeling to image classification by directly estimating the posterior distribution of class labels conditioned on corrupted images, avoiding the need for explicit clean-image reconstruction and therefore dramatically reducing computational complexity.

\section{Discrete Diffusion Classification Modeling}
\label{sec:didicm}

In this section, we propose \emph{Discrete Diffusion Classification Modeling} (DiDiCM), the diffusion-based framework that underlies our work. 
Building on \cite{anderson2012continuous,campbell2022continuous,meng2022concrete,lou2023discrete}, we address the classification task within a continuous-time diffusion framework, where the target distribution of interest is the posterior, $P(\rc|\rvy)$. 
We then reformulate the diffusion processes to effectively model the posterior in a probabilistic manner.

% In this section, we propose \emph{Discrete Diffusion Classification Modeling} (DiDiCM), a framework of diffusion-based models that underlies our work.
% Building upon \cite{anderson2012continuous,campbell2022continuous,meng2022concrete,lou2023discrete}, we address the classification task using a continuous time diffusion framework, where the target distribution of interest is the posterior, $P(\rc|\rvy)$.

\subsection{Forward Process}
\label{sec:forward}

We define the forward process across $t \in [0,1]$ as an evolution of noisy distributions $q(\rc_t|\rvy) \in \R^K$ according to a continuous time Markov process given by a linear ordinary differential equation:
\begin{align}
    \frac{dq(\rc_t|\rvy)}{dt} = R_t \cdot q(\rc_t|\rvy) ~~ \text{s.t.} ~~ R_t := \sigma_t (\mathbf{1} \mathbf{1}^T - KI) ~,
\end{align} 
where $q(\rc_0|\rvy) \in \R^K$ approximates $P(\rc|\rvy)$, and $\sigma_t \in [0,\infty)$ is a strictly decreasing function satisfying $\sigma_1 = 0$.

The matrix $R_t = \sigma_t R$ is the uniform transition \emph{rate} matrix \cite{anderson2012continuous,campbell2022continuous}.
At each diffusion step $t$, this matrix transforms the class label associated with the input into other random classes with some probability.
More generally, for any $i \neq j$, the % transition rate matrix 
entry $R_t(i,j)$ is the rate (occurrences per unit time) at which class label $i$ transitions to class label $j$, implying that % the next class label in the process will likely be one for which $R_t(i,j)$ is high, and furthermore, 
the higher the rate is, the less time it will take for this transition to occur.
% satisfying $R_t(i,j) \geq 0$ for $i \neq j$ and $R_t(j,j) = - \sum_{i \neq j} R(i,j)$, meaning that its columns must sum to zero, thus $q(\rc_t|\rvy)$ does not gain or loss mass.
% Intuitively, for any $i \neq j$, the transition rate matrix entry $R_t(i,j)$ is the rate (occurrences per unit time) at which class label $i$ transitions to class label $j$, implying that the next class label in the process will likely be one for which $R_t(i,j)$ is high, and furthermore, the higher the rate is, the less time it will take for this transition to occur.
% In our paper, we fix $R_t$ to the uniform transition rate matrix \citep{hoogeboom2021argmax,austin2021structured}, defined by $R_t := \sigma_t R$, where 
% $R := \mathbf{1} \mathbf{1}^T - KI \in \R^{K \times K}$
% , and $\sigma_t \in [0,\infty]$ is a strictly decreasing function s.t. $\sigma_1 = 0$.
% For each diffusion step $t$, the uniform transition rate matrix transforms the true class label to a random class with some probability.
During the forward diffusion process, $q(\rc_1|\rvy) \in \R^K$ becomes the uniform distribution over all class labels, corresponding to the completely noisy state. Supplementary theoretical details regarding these rate matrices are provided in Appendix~\ref{app:theoretical_analysis}, with further intuition discussed in Appendix~\ref{app:forward}.

Given rate matrices of the form $R_t = \sigma_t R$, we derive their eigendecomposition using 
$R = U \Lambda U^{-1}$, where $U \in \mathbb{R}^{K \times K}$ denotes the matrix of eigenvectors 
and $\Lambda \in \mathbb{R}^{K \times K}$ is the diagonal matrix of eigenvalues. 
Starting from $q(\rc_0 | \rvy)$, being a one-hot encoding of 
$\rc_0 \sim P(\rc | \rvy)$ representing a sample from the target distribution\footnote{
When training with Mixup~\cite{zhang2017mixup} or CutMix~\cite{yun2019cutmix}, the corresponding mixed distributions are used.},
the sequence of transitions can be accumulated to efficiently compute the forward process 
for any noise level $t \in [0, 1]$ in closed form:
\begin{align}
\label{eq:efficent_forward}
    q(\rc_t|\rvy) = U \exp \left( \overline{\sigma}_t \Lambda \right) U^{-1} \cdot q(\rc_0|\rvy) ~,
\end{align}
where $\overline{\sigma}_t := \int_0^t \sigma_s \, ds \in \mathbb{R}$ denotes the total noise applied to $q(\rc_0|\rvy)$.  
A detailed proof of the above is provided in Theorem~\ref{the:total_forward} in the Appendix.  

Note that the total noise term $\overline{\sigma}_t$ admits a closed-form solution for suitable choices of the noise schedule $\sigma_t$.  
For example, under a log-linear schedule defined by $\sigma_t := a b^t \log b$, with $a, b \in \mathbb{R}$ chosen so that $\sigma_1 \approx 0$, the total noise becomes $\overline{\sigma}_t = a b^t - a$.

% Given these rate matrices, we define the infinitesimal transition matrix $Q_t := I+R_t \Delta t \in \R^{K \times K}$ for a sufficiently small time increment $\Delta t > 0$.
% Then, starting from $q(\rc_0|\rvy)$, being a one-hot encoding of $\rc_0 \sim P(\rc|\rvy)$ that represents a sample from our target distribution\footnote{When training with Mixup \cite{zhang2017mixup} and CutMix \cite{yun2019cutmix}, mixed distributions are used accordingly.}, 
% the forward process can be simulated by taking small Euler steps of size $\Delta t$ as follows:
% \begin{align}
% \label{eq: forward euler}
%     \nonumber
%     q(&\rc_{t+\Delta t}|\rvy) \approx Q_t \cdot q(\rc_t|\rvy) \\
%     &\text{s.t.} ~~~ Q_t(i,j) = \begin{cases}
%         \sigma_t \Delta t & i \neq j \\
%         1 - (K-1) \sigma_t \Delta t  & i = j
%     \end{cases} ~.
% \end{align}
% \red{Theorem~\ref{the:matrix-euler} in Appendix~\ref{app:rate_matrices} provides a detailed proof of this construction.
% In Appendix~\ref{app:forward}, we show that the forward process allows closed-form approximation of noisy distributions during training.}

\subsection{Reverse Process}
\label{sec:reverse}

The distribution of interest emerges from the reversal of the forward process. We denote these reversal noisy distributions as $p(\rc_t|\rvy) \in \R^K$.

Assume we have access to a score matrix $S_t \in \R^{K \times K}$ s.t. $S_t(i,j;\rvy):= q(\rc_t=i|\rvy) /  q(\rc_t=j|\rvy) \in \mathbb{R}$.
These ratios are the conditional analogue of the Concrete Score \cite{meng2022concrete,lou2023discrete}, which generalizes the standard score function \citep{song2019generative} when applied at diffusion step $t$ for $i \neq j$ (when $i=j$ this ratio is simply $1$).
Each column $j$ in $S_t$ represents the transition scores of the class label $j$.

Following the continuous-time formulations in \citet{anderson2012continuous} and  \citet{campbell2022continuous}, it turns out that the reversal process is defined by another linear ordinary differential equation:
% \begin{align}
%     \frac{dq(\rc_{T-t}|\rvy)}{dt} = \overline{R}_{T-t} \cdot q(\rc_{T-t}|\rvy) ~,~ \overline{R}_t(\rc_t,\rc_{t+\Delta t}) :=
%     \begin{cases}
%         s(\rc_{t+\Delta t},\rc_t) R_t(\rc_{t+\Delta t},\rc_t) ~, & \rc_{t+\Delta t} \neq \rc_t \\
%         - \sum_{\rc\neq \rc_t} \overline{R}_t(\rc,\rc_{t+\Delta t}) ~, & \rc_{t+\Delta t} = \rc_t
%     \end{cases} ~.
% \end{align}
\begin{align}
\label{eq:reverse ode}
    \nonumber
    \frac{dp(\rc_{1-t}|\rvy)}{dt}& = \overline{R}_{1-t} \cdot p(\rc_{1-t}|\rvy) \\
    \text{s.t.} ~~~ \overline{R}_t& := S_t \odot R_t - \operatorname{diag} \left( \mathbf{1}^T \left( S_t \odot R_t \right) \right) ,
\end{align}
where $p(\rc_1|\rvy) := \mathcal{U}(\{1,\dots,K\}) \in \R^K$ is the uniform distribution across class labels, representing the completely noisy state.

We further define the infinitesimal transition matrix $\overline{Q}_t := I+\overline{R}_t \Delta t \in \R^{K \times K}$ for a sufficiently small time increment $\Delta t > 0$.
The reverse process can then be simulated by taking small Euler steps with size $\Delta t$ as follows:
% \begin{align}
% \label{eq: reverse euler}
%     \nonumber
%     p(\rc_{t-\Delta t}|\rvy) &= \left[ I + \overline{R}_t \Delta t \right] p(\rc_t|\rvy) + o(\Delta t^2) \\
%     \Rightarrow p(\rc_{t-\Delta t}=i|\rc_t,\rvy)
%     &= \begin{cases}
%         S_t(i,\rc_t) \sigma_t \Delta t & \rc_t \neq i \\
%         1 - \sum_{\rc \neq \rc_t} S_t(\rc,\rc_t) \sigma_t \Delta t  & \rc_t = i 
%     \end{cases} ~~ + o(\Delta t^2) ~.
% \end{align}
\begin{align}
\label{eq: reverse euler}
    \nonumber
    p(&\rc_{t-\Delta t}|\rvy) \approx \overline{Q}_t \cdot p(\rc_t|\rvy) \\
    &\text{s.t.} ~~~ \overline{Q}_t(i,j;\rvy) =
    \begin{cases}
        S_t(i,j;\rvy) \sigma_t \Delta t & i \neq j \\
        1 - \sum_{\rc \neq j} S_t(\rc,j;\rvy) \sigma_t \Delta t  & i = j 
    \end{cases} ~.
\end{align}
Theorem~\ref{the:matrix-euler} in the Appendix provides a formal proof of this discrete approximation, while additional insights into the structure and intuition behind the transition matrix $\overline{Q}_t$ are presented in Appendix~\ref{app:reverse}.

\subsection{Training Objective}

In practice, the score matrix $S_t \in \mathbb{R}^{K \times K}$ is not directly accessible. 
Following \citet{lou2023discrete}, we train a parameterized model $s_\theta$ to approximate columns in $S_t$, i.e.,
$s_\theta(\rvy, \rc_t, t) \approx [S_t(1, \rc_t; \rvy), \dots, S_t(K, \rc_t; \rvy)]^T \in \mathbb{R}^{K \times 1}$,
where by construction $s_\theta(\rvy, \rc_t, t)_{\rc_t} = 1$.

To train this model, we propose a variant of the score entropy (SE) loss \citep{lou2023discrete}, conditioned on the input $\rvy$ and weighted by the noise level $\sigma_t$. 
We denote this objective as the DiDiCM loss:
\begin{align}
\label{eq:loss}
    \nonumber
    \Ls_\text{DiDiCM}(\theta) &:=
    \mathop{\E}\limits_{\substack{t \sim \mathcal{U}([0,1]) \\[1pt] \rvy, \rc_t \sim q(\rvy, \rc_t)}}
    \Big[~ \frac{\sigma_t}{K} \Big( \mathbf{1}^T A(S_t(\cdot,\rc_t;\rvy))
    \\ + \mathbf{1}^T s_\theta(\rvy&,\rc_t,t) - S_t(\cdot,\rc_t;\rvy)^T \log s_\theta(\rvy,\rc_t,t) \Big) ~\Big] ~,
\end{align}
where $S_t(\cdot, \rc_t; \rvy) \in \mathbb{R}^K$ denotes the column vector corresponding to index $\rc_t$, 
and $A(a) = a(\log a - 1)$ is applied element-wise to ensure that $\mathcal{L}_{\text{DiDiCM}} \ge 0$.
Intuitively, this loss performs score matching \cite{song2019generative} by optimizing $s_\theta(\rvy,\rc_t,t)$ toward $S_t(\cdot,\rc_t;\rvy)$, while enforcing positive-valued scores.

Consequently, given a data pair $(\rvy, \rc_0) \sim P(\rvy, \rc)$, evaluating this loss requires constructing the noisy label distribution $q(\rc_t|\rvy) \in \R^K$ 
via the efficient forward process defined in Equation~\eqref{eq:efficent_forward}. 
We then sample a noisy class label $j \sim q(\rc_t|\rvy)$ and compute the ratios for all $1 \leq i \leq K$ by $S_t(i, j; \rvy) = q(\rc_t = i|\rvy)/q(\rc_t = j|\rvy)$,
which are then substituted into the objective above in matrix-form.
Algorithm~\ref{alg:training} summarizes the training procedure.

Note that while \citet{lou2023discrete} apply a related denoising approach in the language domain, where the corresponding loss of Equation~\eqref{eq:loss} becomes intractable due to the summation over large discrete spaces, our formulation remains fully tractable for classification.

\begin{algorithm}[h]
\caption{One Stochastic Training Step for DiDiCM}
\label{alg:training}
\begin{algorithmic}[1]
\Require
\Statex - Data-label pair $(\rvy, \rc_0) \sim P(\rvy,\rc_0)$.
\Statex - Forward diffusion noise schedule $\{\sigma_t\}_{t\in[0,1]}$.
\Statex - Uniform transition rate matrix $R = \mathbf{1}\mathbf{1}^T - KI$, and its eigendecomposition $R = U \Lambda U^{-1}$.
\Statex - Scoring model $s_\theta$ to be learned.
\State $q_0 \leftarrow q(\rc_0|\rvy) \in \R^K$
\Comment{Init clean dist. (one-hot or mix)}
\State Sample $t \sim \mathcal{U}([0,1])$
\Comment{Sample noise level}
\State $q_t \leftarrow U \exp \left( \overline{\sigma}_t \Lambda \right) U^{-1} \cdot q_0 \in \R^K$
\Comment{Apply forward}
\State Sample $j \sim q_t$
\State $s \leftarrow q_t / [q_t]_j \in \R^K$ 
\State $\hat{s} \leftarrow s_\theta(\rvy, j, t) \in \R^K$
\Comment{Predict the Concrete Score}
% \State $\hat{s}_j \leftarrow 1$
\State $\hat{\mathcal{L}}_\text{DiDiCM}(\theta) = \frac{\sigma_t}{K} \sum_{i=1}^K ( \hat{s}_i - s_i \log \hat{s}_i + A(s_i) )$
\NoNumber{\Comment{Compute DiDiCM loss}}
\State Backpropagate on $\nabla_{\!\theta} \hat{\mathcal{L}}_\text{DiDiCM}(\theta)$ and run optimizer
\end{algorithmic}
\end{algorithm}

\section{Simulating DiDiCM with Concrete Scores}

\begin{figure*}
\begin{multicols}{2}

\begin{algorithm}[H]
\caption{DiDiCM over Class Prob. (DiDiCM-CP)}
\label{alg:didicm-pd}
\begin{algorithmic}[1]
\Require Instance $\rvy \sim P(\rvy)$. Model $s_\theta$. Noise schedule $\{\sigma_t\}_{t\in[0,1]}$. Step size $\Delta t > 0$.
\State $p(\rc_1|\rvy) := \mathcal{U}(\{1,\dots,K\})$
\State $p_1 \leftarrow p(\rc_1|\rvy)$
\Comment{Initialize noise}
\State $t \leftarrow 1$
\While{$t > 0$}
    \State $j \leftarrow \arg \min p_t$
    \State $s \leftarrow s_\theta(\rvy, j, t)$
    \Comment{Predict the Concrete Score}
    \State $q_t \leftarrow s / \sum_{i=1}^K s_i \in \R^K$
    \State $S \leftarrow q_t \cdot (1 / q_t)^T \in \R^{K \times K}$
    \State $R_t := \sigma_t (\mathbf{1}\mathbf{1}^T - KI) \in \R^{K \times K}$
    \State $\overline{R} \leftarrow S \odot R_t - \operatorname{diag} ( \mathbf{1}^T ( S \odot R_t ) ) \in \R^{K \times K}$
    \State $p_{t-\Delta t} \leftarrow ( I + \overline{R} \Delta t ) \cdot p_t$
    \Comment{Diffuse next posterior}
    \State $t \leftarrow t - \Delta t$
\EndWhile
\State \Return $p_0$
\Comment{Return the estimated posterior}
\end{algorithmic}
\end{algorithm}

\columnbreak

\begin{algorithm}[H]
\caption{DiDiCM over Class Labels (DiDiCM-CL)}
\label{alg:didicm-cl}
\begin{algorithmic}[1]
\Require Instance $\rvy \sim P(\rvy)$. Model $s_\theta$. Noise schedule $\{\sigma_t\}_{t\in[0,1]}$. Step size $\Delta t > 0$.
\State $p(\rc_1|\rvy) := \mathcal{U}(\{1,\dots,K\})$
\State Sample $\rc_1^i \sim p(\rc_1|\rvy)$ for all $1 \leq i \leq N$
\Comment{Init. noise}
\State $t \leftarrow 1$
\For{$i=1$ \textbf{to} $N$}
\Comment{Apply multiple sampling}
    \While{$t > 0$}
        \State $s \leftarrow s_\theta(\rvy, \rc_t^i, t)$
        \Comment{Predict the Concrete Score}
        \State $r \leftarrow \sigma_t \mathbf{1} - \sigma_t K \cdot \mathbf{e}_{\rc_t^i} \in \R^K$
        \State $\overline{r} \leftarrow s \odot r - \mathbf{1}^T (s \odot r) \cdot \mathbf{e}_{\rc_t^i} \in \R^K$
        \State $p_{t-\Delta t | t} \leftarrow \mathbf{e}_{\rc_t^i} + \overline{r} \Delta t \in \R^K$
        \State Sample $\rc_{t - \Delta t} \sim p_{t-\Delta t | t}$
        \Comment{Diffuse next label}
        \State $t \leftarrow t - \Delta t$
    \EndWhile
\EndFor
\State \Return $p_0 \leftarrow \frac{1}{N} \sum_{i=1}^N \mathbf{e}_{\rc_0^i}$
\Comment{Return the est. posterior}
\end{algorithmic}
\end{algorithm}

\end{multicols}
\end{figure*}

% Inspired by the work of \citet{lou2023discrete}, in \TODO{Section~???} we first introduce a reversal simulation on the noisy class labels $\rc_t$ to obtain $\hat{\rc}_0 \sim \hat{p}(\rc_0 | \rvy)$.
% Following \citet{belhasin2024uncertainty}, we may employ the ESC method by averaging multiple class predictions to estimate $\hat{p}(\rc_0|\rvy)$.

In this section, we discuss on methods for simulating the reversal process in order to estimate the posterior distribution $p_\theta(\rc_0|\rvy) \in \mathbb{R}^K$ by exploiting the scoring model $s_\theta$.

A straightforward, yet naive, approach involves predicting $S_t$ using the scoring model $s_\theta$ by constructing the full score matrix
$S_t^\theta := [s_\theta(\rvy, 1, t), \dots, s_\theta(\rvy, K, t)] \in \mathbb{R}^{K \times K}$. 
Leveraging Equation~\ref{eq:reverse ode}, the approximation of the score matrix $S_t^\theta$ can now be used to approximate the reversal transition rate matrix $\overline{R}_t^\theta \approx \overline{R}_t$.
Finally, $\overline{R}_t^\theta$ is used to construct $\overline{Q}_t^\theta := I + \overline{R}_t^\theta \Delta t$ and the reverse diffusion step is applied through:
\begin{align}
\label{eq:pd-ddcm}
    p_\theta(\rc_{t-\Delta t}|\rvy) = \overline{Q}^\theta_t \cdot p_\theta(\rc_t|\rvy)~.
\end{align}

Starting from the uniform distribution $p_\theta(\rc_1|\rvy) := p(\rc_1|\rvy) := \mathcal{U}(\{1,\dots,K\})$,
Equation~\eqref{eq:pd-ddcm} enables the approximation of the target posterior $p_\theta(\rc_0|\rvy) \in \R^K$.  
However, this approach requires $K \frac{1}{\Delta t}$ model iterations per input $\rvy$.
For instance, with $\frac{1}{\Delta t} = 8$ diffusion steps and $K=1000$ class labels, the reverse process would require 8000 model iterations to classify just a single image.

To address this issue, we propose hereafter two alternative approaches for the reverse diffusion process:
(1) DiDiCM over class probabilities (DiDiCM-CP), detailed in Section~\ref{sec:pd-ddcm} and summarized in Algorithm~\ref{alg:didicm-pd};
and (2) DiDiCM over class labels (DiDiCM-CL), detailed in Section~\ref{sec:cl-ddcm} and summarized in Algorithm~\ref{alg:didicm-cl}.
DiDiCM-CP is computationally more efficient, but demands greater memory resources, while DiDiCM-CL is more memory-efficient at the cost of additional computation.

% In this section, we describe how to simulate the reversal process to obtain an estimate to the posterior distribution $p_\theta(\rc_0|\rvy) \in \mathbb{R}^K$ by exploiting the scoring model $s_\theta$.
% We propose two approaches for the reverse diffusion process: (1) diffusion over class labels, detailed in Section~\ref{sec:cl-ddcm} and summarized in Algorithm~\ref{alg:didicm-cl}, and (2) diffusion over class-probability vectors directly, detailed in Section~\ref{sec:pd-ddcm} and summarized in Algorithm~\ref{alg:didicm-pd}.
% The first approach is more memory-efficient but requires additional model iterations to construct the posterior, whereas the second approach is computationally more efficient but demands greater memory resources.

\subsection{DiDiCM over Class Probabilities}
\label{sec:pd-ddcm}

Here, we describe how to apply DiDiCM in the class probabilities space, referred to as \emph{Discrete Diffusion Classification Modeling over Class Probabilities} (DiDiCM-CP). DiDiCM-CP provides a computationally efficient framework for estimating the posterior distribution $p_\theta(\rc_0|\rvy) \in \mathbb{R}^K$ through a single model iteration for each diffusion step in the process.
The complete procedure is summarized in Algorithm~\ref{alg:didicm-pd}. 

A key advantage in classification tasks is that the prior distribution \(P(\rc) \in \R^K\) is fully tractable, allowing explicit iteration over all possible class labels. Consequently, we aim to leverage this property by examining the structure of the score matrix \(S_t\). We observe that \(S_t\) is a rank-one matrix:  
\begin{align}
\label{eq:rank-one scores}
    S_t = q(\rc_t|\rvy) \left( \frac{1}{q(\rc_t|\rvy)} \right)^T ~,
\end{align} 
where \(1/\cdot\) denotes element-wise inversion. This property enables the construction of \(S_t^\theta\) using a single model iteration, thereby reducing the computational complexity of the reverse process to $\frac{1}{\Delta t}$ model iterations.  

Specifically, for any class label \(j \in \{1,\dots,K\}\), the approximation of the forward distribution \(q_\theta(\rc_t|\rvy) \in \R^K\) can be obtained by normalizing $s_\theta(\rvy, j, t)$ as follows,
\begin{align}
\label{eq:normalized scores}
    q_\theta(\rc_t|\rvy) = 
    \frac{s_\theta(\rvy, j, t)}
         {\sum_{i=1}^K s_\theta(\rvy, j, t)_i} \in \R^K ~.
\end{align}

The approximation of the scores matrix $S_t^\theta$ is then obtained by substituting Equation~\eqref{eq:normalized scores} into Equation~\eqref{eq:rank-one scores}, and then is used in the pipeline of Equation~\eqref{eq:pd-ddcm} to achieve the target posterior $p_\theta(\rc_0|\rvy)$.

Note that in theory, the choice of $j$ as an input for the scoring model $s_\theta$ does not affect $q_\theta(\rc_t|\rvy)$ in Equation~\eqref{eq:normalized scores}.
However, we empirically find that setting  
$j := \arg\min p_\theta(\rc_t|\rvy)$, i.e., selecting the class label that \emph{minimizes} the current noisy posterior distribution, achieves the highest performance among other selection strategies.  
An empirical ablation study supporting this choice is provided in Appendix~\ref{app:selection}.

Regarding memory complexity, applying DiDiCM-CP requires maintaining $\mathcal{O}\!\left(K^2\right)$ memory, corresponding to the transition matrix $\overline{Q}^\theta_t \in \mathbb{R}^{K \times K}$ constructed for performing the diffusion steps described in Equation~\eqref{eq:pd-ddcm}. 
In the next subsection, we propose a more memory-efficient approach that works on class label samples drawn from the noisy distributions of the diffusion process. This reduces the dimensionality of the scores to $K$, since the noisy class label is provided at each step.

% In contrast, DiDiCM-CL requires only $\mathcal{O}\!\left(K\right)$ memory, making it more memory efficient, as each diffusion step depends only on the transition rates from the current noisy class label.

\subsection{DiDiCM over Class Labels}
\label{sec:cl-ddcm}

We now describe how to apply DiDiCM in the class label space to diffuse noisy class labels, which we refer to as \emph{Diffusion Classification Modeling over Class Labels} (DiDiCM-CL).
DiDiCM-CL enables memory-efficient sampling from the approximate posterior distribution, 
$\rc_0 \sim p_\theta(\rc_0|\rvy)$,
where the posterior itself can be approximated by averaging multiple one-hot sample candidates.
The overall procedure is summarized in Algorithm~\ref{alg:didicm-cl}.

% We note that the approach described below is largely similar to that proposed in \citep{anderson2012continuous,campbell2022continuous,lou2023discrete}, originally developed for the language domain, under the assumption of a single-token input. The main distinction in our formulation in this section is that our scoring model $s_\theta$ processes an input image in addition to the noisy class label $\rc_t$.

We first describe how to adapt the reverse process of Equation~\eqref{eq: reverse euler} to the class label space by conditioning the distribution of the diffusion step on $\rc_t$ to construct $p(\rc_{t-\Delta t} | \rc_t, \rvy) \in \R^K$.
When conditioned on $\rc_t$, the distribution $p(\rc_t|\rvy)$ reduces to the one-hot encoding of $\rc_t$.  
Consequently, the reverse process in Equation~\eqref{eq: reverse euler} simplifies to:
\begin{align}
    \nonumber
    p(\rc_{t-\Delta t}& = i | \rc_t, \rvy) = \\
    & 
    \begin{cases}
        S_t(i, \rc_t) \, \sigma_t \, \Delta t ~, & \rc_t \neq i \\
        1 - \sum_{\rc \neq \rc_t} S_t(\rc, \rc_t) \, \sigma_t \, \Delta t ~, & \rc_t = i
    \end{cases} ~.
\end{align}
Since our scoring model approximates the columns of $S_t$, a single model evaluation can be used to approximate the above conditioned distribution as follows:
\begin{align}
\label{eq:cl-ddcm}
    \nonumber
    p_\theta(&\rc_{t-\Delta t} = i | \rc_t, \rvy) = \\
    & 
    \begin{cases}
        s_\theta(\rvy, \rc_t, t)_i \, \sigma_t \, \Delta t ~, & \rc_t \neq i \\
        1 - \sum_{i\neq\rc_t} s_\theta(\rvy, \rc_t, t)_i \, \sigma_t \, \Delta t ~, & \rc_t = i
    \end{cases} ~.
\end{align}

Starting from a random class label $\rc_1 \sim p(\rc_1 | \rvy) := \mathcal{U}(\{1,\dots,K\})$, one can iteratively diffuse noisy class labels to generate a label candidate $\rc_0 \sim p_\theta(\rc_0|\rvy)$ using Equation~\eqref{eq:cl-ddcm}.
However, a single sample $\rc_0$ may not represent the class label of $\rvy$. 
To obtain an estimate for the posterior distribution $p_\theta(\rc_0|\rvy)$, we propose to apply multiple reverse processes on the same input $\rvy$ to generate a set of samples $\{\rc_0^i\}_{i=1}^N$ s.t. $\rc_0^i \sim p_\theta(\rc_0|\rvy)$.
Then $p_\theta(\rc_0|\rvy)$ can be estimated by the Monte-Carlo approximation:
\begin{equation}
    p_\theta(\rc_0|\rvy)
    = \E_{\rc_0 \sim p_\theta(\rc_0|\rvy)} \big[ \mathbf{e}_{\rc_0} \big]
    \approx \frac{1}{N} \sum_{i=1}^N \mathbf{e}_{\rc_0^i} ~,
\end{equation}
where $\mathbf{e}_{\rc_0^i}$ is the one-hot encoding of $\rc_0^i$.

Note that applying DiDiCM-CL requires maintaining $\mathcal{O}\!(K+N)$ memory, corresponding to the transition vector that is used in Equation~\eqref{eq:cl-ddcm} and to the maintenance of the class label samples.
Regarding computational complexity, the sampling process requires $\frac{1}{\Delta t}$ model iterations for each input $\rvy$.
When we aim to estimate $p_\theta(\rc_0|\rvy)$, this complexity increases to $N\frac{1}{\Delta t}$.
Nevertheless, we show in Section~\ref{sec:exp} that setting 32 NFEs are sufficient to achieve near-upper-bound image classification performance on the ImageNet dataset.

\begin{figure}[t]
    \centering
    \includegraphics[width=0.85\columnwidth]{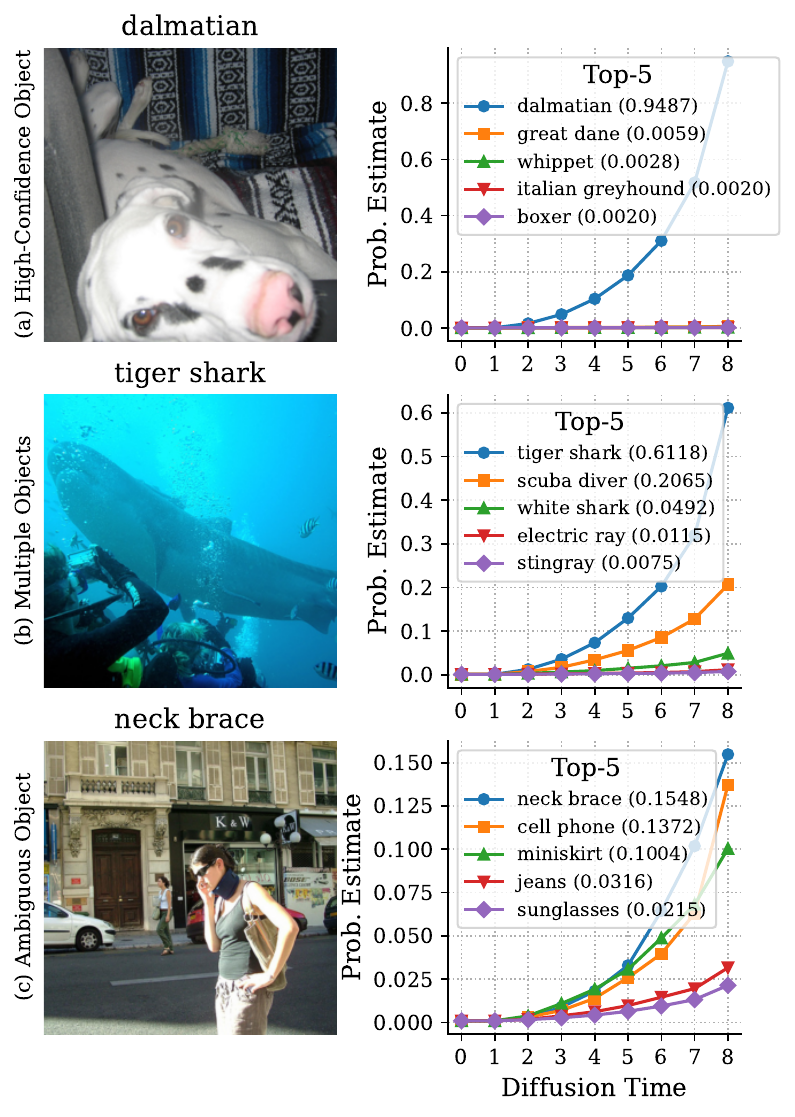}
    \caption{An illustration of our DiDiCM-CP showing the evolution of the top-5 class label probabilities over time for three images, demonstrating different classification challenges.}
    \label{fig:viz}
\end{figure}

\section{Empirical Study}
\label{sec:exp}

In this section, we empirically demonstrate the advantages of using DiDiCM for image classification on the challenging ImageNet-1k (ILSVRC-2012) dataset \cite{russakovsky2015imagenet}.
Figure~\ref{fig:viz} shows the evolution of the top-5 class labels probabilities over time for three images, each demonstrate a different type of classification challenge.
Additional visualizations of misclassified images are provided in Appendix~\ref{app:bad}.

We compare DiDiCM against the standard classification framework using state-of-the-art training techniques. Our primary results, presented in Table~\ref{tab:main_results}, indicate that the discrete diffusion approach matches or outperforms standard classification models under low-uncertainty conditions and significantly surpasses their performance when uncertainty is included, due to its stochastic nature. These findings suggest that our discrete diffusion method provides a clear advantage over standard classifiers in image classification.

% The ImageNet dataset has attracted considerable attention in recent years, serving as the gold standard benchmark for evaluating image classification models. Numerous architectural innovations have been proposed to enhance performance on the ImageNet validation set \cite{tan2019efficientnet,radford2021learning,ding2022davit,yu2022coca,woo2023convnext}. However,
Implementing our core scoring model, denoted as $s_\theta$, requires additional components to handle both the noisy class label $\rc_t$ and its corresponding noise level $t$, alongside the modules responsible for image processing. Consequently, a direct comparison with standard classification architectures is not straightforward. To enable a fair evaluation, each conventional architecture must be adapted to incorporate the conditioning require by DiDiCM, while preserving its core image processing components.

To this end, we propose a novel architecture, which enables a direct comparison with the performance of the well-established ResNet as reported in the literature.
% and illustrated in Figure~\ref{fig:didirn}, called the \emph{Discrete Diffusion Residual Network} (DiDiRN). The DiDiRN architecture preserves the core image processing components of ResNet \citep{he2016deep}, allowing for a direct and fair comparison with ResNet under similar training conditions.
% Specifically, we demonstrate that DiDiRN-50, a comparable design to ResNet-50, offers advantages when used with DiDiCM compared to standard ResNet-50 classifiers, even when the latter are trained using the advanced state-of-the-art ResNet-SB training regime \cite{wightman2021resnet}.
% In Appendix~\ref{app:all_resnet}, we extend this analysis to evaluate performance across all standard ResNet model sizes.
% Future work could explore more sophisticated architectural variations to further enhance the empirical results presented in this study.

% To this end, in Section~\ref{sec:didirn} we revisit the ResNet architecture and introduce a customized variant that enables direct comparison with the well-established ResNet performance reported in the literature.
% In Section~\ref{sec:training},  we outline the training setup used in our experiments to achieve the reported results.

% we outline two distinct training regimes used to evaluate our approach: (1) A simple training procedure with standard image augmentations proposed by PyTorch \cite{paszke2019pytorch}; and (2) A sophisticated, yet state-of-the-art for ResNet, training scheme proposed by \citet{wightman2021resnet}.

\begin{figure}[t]
    \centering
    \includegraphics[width=0.95\columnwidth]{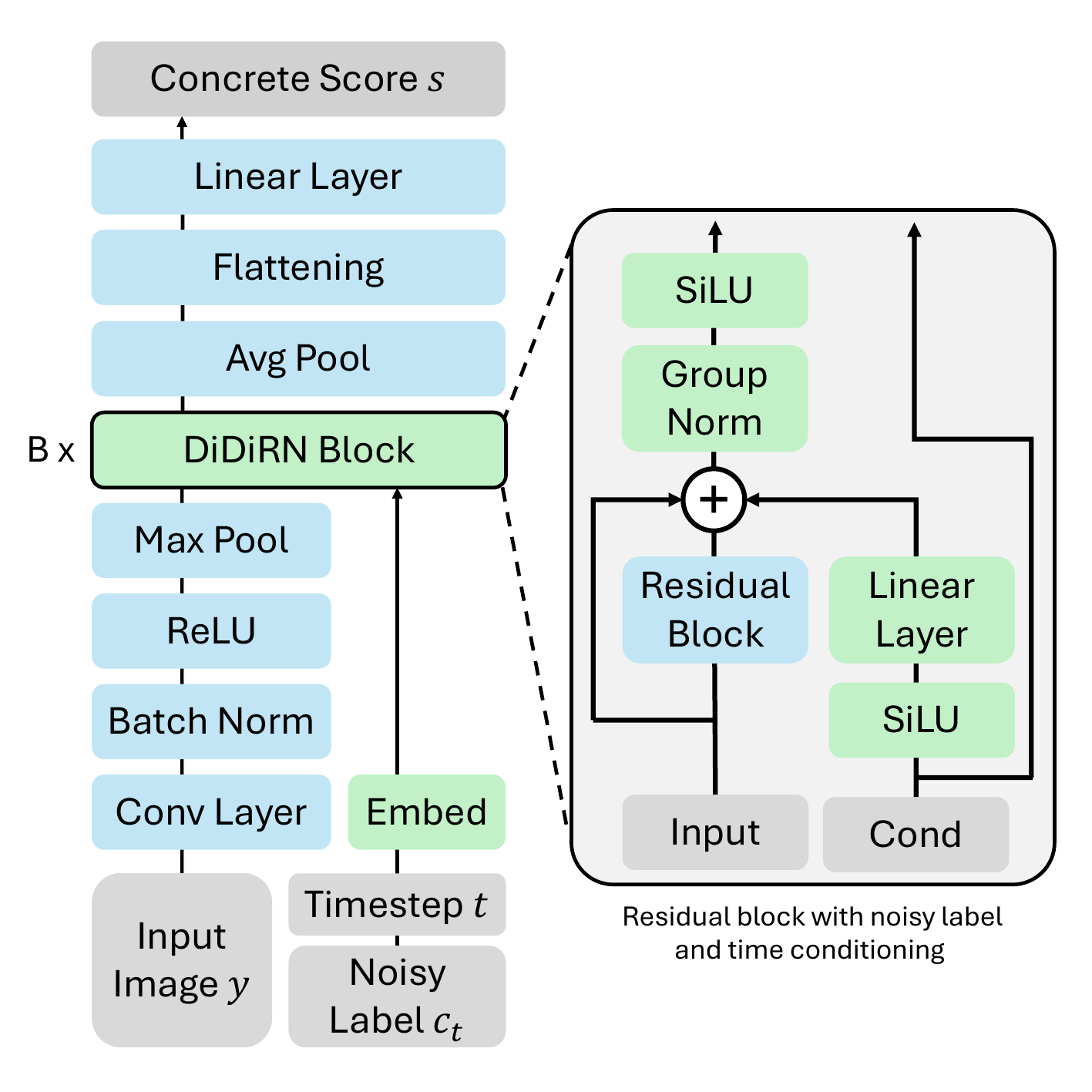}
    \caption{\textbf{The Discrete Diffusion Residual Network (DiDiRN) architecture.}
    DiDiRN preserves the core image-processing components of ResNet while adding conditioning modules to support the diffusion process of DiDiCM. Original ResNet modules are shown in \textcolor{blue!60}{blue}, and the newly introduced components in \textcolor{Green}{green}.}
    \label{fig:didirn}
\end{figure}

\subsection{The Discrete Diffusion Residual Network}
\label{sec:didirn}

Here, we introduce the \emph{Discrete Diffusion Residual Network} (DiDiRN), a variant of the ResNet architecture that incorporates conditioning mechanisms inspired by Guided Diffusion~\citep{dhariwal2021diffusion} to process the noisy class label and its associated noise level. Intuitively, DiDiRN is designed to enable lightweight conditioning on noisy class labels within the feature extraction layers of ResNet, while preserving its original convolutional structure used for image processing. The architectural modifications are illustrated in Figure~\ref{fig:didirn}.

In particular, the ResNet architecture consists of three stages:
(1) converting the input image into a set of low-level base features through initial convolution and pooling;
(2) performing deep feature extraction using a sequence of B x residual blocks that progressively refine the representation, supported by residual connections;
and (3) applying classification layers that aggregate the extracted features using global average pooling, followed by a fully connected layer to produce the final output predictions.

Inspired by the classifier-free guidance modules of Guided Diffusion \citep{dhariwal2021diffusion},
we modify the ResNet architecture described above to construct the DiDiRN building blocks.
Specifically, in Stage (1), we introduce initial label and time embedding layers, each producing embedding vectors of identical dimensionality specified by a hyperparameter. These embeddings are then combined into a single summarized embedding vector.
In Stage (2), corresponding to the feature extraction process, we adjust each residual block by first transforming the summarized embedding vector using a SiLU activation followed by a linear layer, and then adding the resulting vector to the skip connection.
The summarized vector therefore encodes a combination of the residual, the block input, and the conditioning.
Finally, we normalize the outcomes using a GroupNorm layer and apply another SiLU activation.

% Our architectural design supports conditional feature extraction while preserving the core convolutional structure of the ResNet architecture. This preservation enables a fair and direct comparison with standard ResNet classifiers, as both employ identical image processing components. Furthermore, it allows meaningful benchmarking against the widely reported ResNet performance in the literature and under the state-of-the-art training regime proposed by \citet{wightman2021resnet}, thereby removing the need to develop a specialized training procedure to evaluate our model. Future work may explore more sophisticated architectural variations that could further enhance the empirical analysis presented in this study.

\begin{table*}[htb]
\centering
\small
\begin{tabular}{c|c|c||ccc|ccc|ccc}
\toprule
\multirow{2}{*}{\textbf{Model}} & \multirow{2}{*}{\textbf{Aug. Recipe}} & \multirow{2}{*}{\textbf{Metric}} & \multicolumn{3}{c|}{\textbf{Training Ratio - 25\%}} & \multicolumn{3}{c|}{\textbf{Training Ratio - 50\%}} & \multicolumn{3}{c}{\textbf{Training Ratio - 100\%}} \\
 &  &  & 56 & 112 & 224 & 56 & 112 & 224 & 56 & 112 & 224 \\
\midrule
\multirow{4}{*}{\centering\shortstack{Standard\\Classifiers\\\\(ResNet-50)}}
 & \multirow{2}{*}{Weak Aug} 
 & \cellcolor{blue!10}Top-1 & \cellcolor{blue!10}44.86 & \cellcolor{blue!10}57.82 & \cellcolor{blue!10}64.48 & \cellcolor{blue!10}54.33 & \cellcolor{blue!10}66.19 & \cellcolor{blue!10}71.41 & \cellcolor{blue!10}61.75 & \cellcolor{blue!10}72.55 & \cellcolor{blue!10}76.57 \\
 &  & \cellcolor{blue!10}Top-5 & \cellcolor{blue!10}66.46 & \cellcolor{blue!10}77.72 & \cellcolor{blue!10}82.80 & \cellcolor{blue!10}75.68 & \cellcolor{blue!10}84.78 & \cellcolor{blue!10}88.41 & \cellcolor{blue!10}82.58 & \cellcolor{blue!10}89.84 & \cellcolor{blue!10}92.42 \\
 
 & \multirow{2}{*}{Strong Aug} 
 & \cellcolor{green!10}Top-1 & \cellcolor{green!10}53.77 & \cellcolor{green!10}66.51 & \cellcolor{green!10}71.85 & \cellcolor{green!10}60.51 & \cellcolor{green!10}72.31 & \cellcolor{green!10}\textbf{77.05} & \cellcolor{green!10}64.71 & \cellcolor{green!10}75.96 & \cellcolor{green!10}\textbf{80.42} \\
 &  & \cellcolor{green!10}Top-5 & \cellcolor{green!10}76.03 & \cellcolor{green!10}85.41 & \cellcolor{green!10}88.72 & \cellcolor{green!10}81.74 & \cellcolor{green!10}89.81 & \cellcolor{green!10}92.64 & \cellcolor{green!10}85.32 & \cellcolor{green!10}92.30 & \cellcolor{green!10}94.60 \\
\midrule
\multirow{4}{*}{\centering\shortstack{\textbf{Ours:} \\\textbf{DiDiCM-CP}\\$1/\Delta t = 8$\\\\(DiDiRN-50)}}
 & \multirow{2}{*}{Weak Aug} 
 & \cellcolor{blue!10}Top-1 & \cellcolor{blue!10}\textcolor{Green}{57.92} & \cellcolor{blue!10}65.48 & \cellcolor{blue!10}68.08 & \cellcolor{blue!10}\textcolor{Green}{64.85} & \cellcolor{blue!10}71.69 & \cellcolor{blue!10}73.73 & \cellcolor{blue!10}\textcolor{Green}{69.87} & \cellcolor{blue!10}\textcolor{Green}{76.30} & \cellcolor{blue!10}77.68 \\
 &  & \cellcolor{blue!10}Top-5 & \cellcolor{blue!10}\textcolor{Green}{80.26} & \cellcolor{blue!10}\textcolor{Green}{85.75} & \cellcolor{blue!10}87.38 & \cellcolor{blue!10}\textcolor{Green}{85.17} & \cellcolor{blue!10}89.70 & \cellcolor{blue!10}91.14 & \cellcolor{blue!10}\textcolor{Green}{88.95} & \cellcolor{blue!10}\textcolor{Green}{92.50} & \cellcolor{blue!10}93.37 \\
 
 & \multirow{2}{*}{Strong Aug} 
 & \cellcolor{green!10}Top-1 & \cellcolor{green!10}\textbf{59.05} & \cellcolor{green!10}\textbf{68.83} & \cellcolor{green!10}\textbf{72.27} & \cellcolor{green!10}\textbf{65.75} & \cellcolor{green!10}\textbf{73.67} & \cellcolor{green!10}\textbf{77.01} & \cellcolor{green!10}\textbf{70.80} & \cellcolor{green!10}\textbf{77.89} & \cellcolor{green!10}\textbf{80.40} \\
 &  & \cellcolor{green!10}Top-5 & \cellcolor{green!10}\textbf{81.93} & \cellcolor{green!10}\textbf{88.76} & \cellcolor{green!10}\textbf{91.00} & \cellcolor{green!10}\textbf{86.72} & \cellcolor{green!10}\textbf{91.63} & \cellcolor{green!10}\textbf{93.62} & \cellcolor{green!10}\textbf{89.69} & \cellcolor{green!10}\textbf{93.75} & \cellcolor{green!10}\textbf{95.29} \\
\bottomrule
\end{tabular}
\caption{\textbf{ImageNet Top-1 and Top-5 Accuracy:} DiDiCM (8 steps) vs. standard classifiers under varying uncertainty. Weak Aug uses standard PyTorch augmentations \cite{paszke2019pytorch}, Strong Aug follows the state-of-the-art ResNet recipe \cite{wightman2021resnet} (see Appendix~\ref{app:training} for more details). Best top-1 and top-5 per column are bolded; Weak Aug models outperforming Strong Aug classifiers are highlighted in \textcolor{Green}{green}.}
\label{tab:main_results}
\end{table*}

\begin{figure*}[!t]
    \centering
    \begin{subfigure}[t]{0.58\textwidth}
        \centering
        \includegraphics[width=\textwidth]{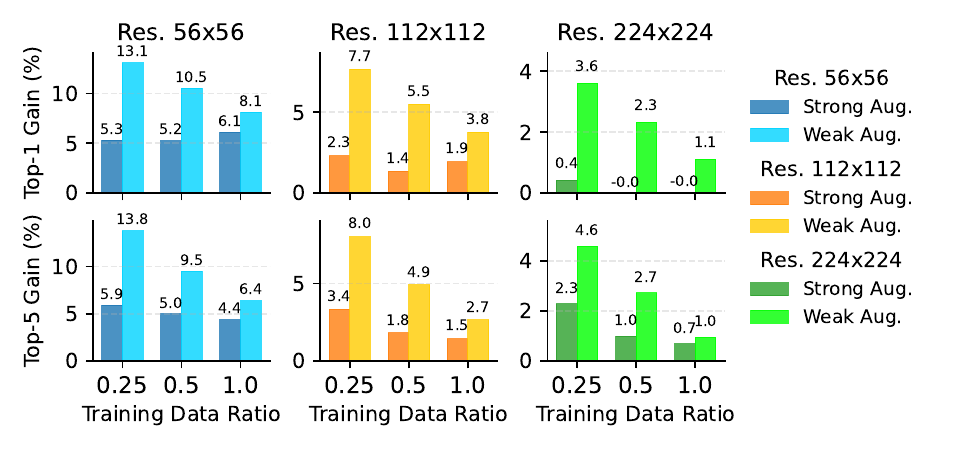}
        \caption{\textbf{ImageNet Top-1 and Top-5 Accuracy Gains}}
        \label{fig:gain}
    \end{subfigure}%
    \hfill
    \begin{subfigure}[t]{0.42\textwidth}
        \centering
        \includegraphics[width=\textwidth]{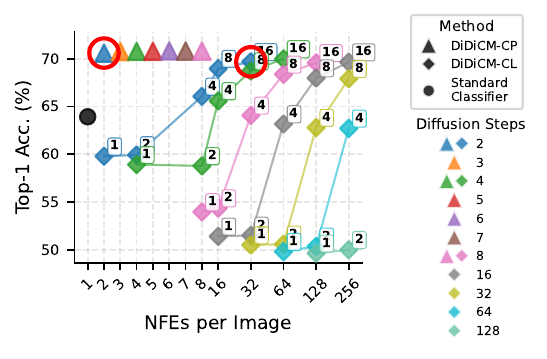}
        \caption{\textbf{Quality-Efficiency Analysis}}
        \label{fig:nfe}
    \end{subfigure}
    \caption{(a) DiDiCM (8 steps) vs. standard classifiers under varying uncertainty (see Appendix~\ref{app:training} for augmentation policy). (b) NFEs vs. top-1 accuracy for DiDiCM-CP, DiDiCM-CL, and the standard classifier at resolution 56 using the full training set. Numbers indicate the sample count used. Red markers denote the best-performing DiDiCM-CP and DiDiCM-CL results.}
    \label{fig:gain_and_nfe}
    \vspace*{-0.1cm}
\end{figure*}

\subsection{DiDiCM vs. Standard Classifiers on ImageNet}

We now discuss our experimental results in Table~\ref{tab:main_results}, comparing DiDiCM with standard classifiers. Accuracy gains are shown in Figure~\ref{fig:gain}, and the quality-efficiency tradeoff of DiDiCM is illustrated in Figure~\ref{fig:nfe}.

\textbf{Experimental Setup}. ~
All experiments in this section are conducted using the ResNet-50 architecture. For DiDiCM, we construct the corresponding DiDiRN-50 model. In Appendix~\ref{app:all_resnet}, we extend the analysis to all standard ResNet model sizes.
We organize the experiments into four groups. For each group, we train both standard classification models and DiDiCM scoring models using two variants of the A1 training recipe from ResNet-SB~\cite{wightman2021resnet}, which represents the current state-of-the-art for ResNet-50.
The first variant follows ResNet-SB but replaces its data augmentation pipeline with the standard PyTorch implementation~\cite{paszke2019pytorch}, denoted as \emph{Weak Aug}. The second variant retains the full recipe, denoted as \emph{Strong Aug}.
Each experimental group comprises nine uncertainty settings, defined by varying image corruption levels across three input resolutions (224, 112, and 56) and by randomly sampling class-balanced subsets of the ImageNet training data at three ratios (0.25, 0.5, and 1.0). More training details are provided in Appendix~\ref{app:training}.

% and illustrated in Figure~\ref{fig:didirn}, called the \emph{Discrete Diffusion Residual Network} (DiDiRN). The DiDiRN architecture preserves the core image processing components of ResNet \citep{he2016deep}, allowing for a direct and fair comparison with ResNet under similar training conditions.
% Specifically, we demonstrate that DiDiRN-50, a comparable design to ResNet-50, offers advantages when used with DiDiCM compared to standard ResNet-50 classifiers, even when the latter are trained using the advanced state-of-the-art ResNet-SB training regime \cite{wightman2021resnet}.
% In Appendix~\ref{app:all_resnet}, we extend this analysis to evaluate performance across all standard ResNet model sizes.
% Future work could explore more sophisticated architectural variations to further enhance the empirical results presented in this study.

\textbf{Strong Aug vs. Strong Aug}. ~
We first compare the performance of DiDiCM and standard classifiers using the state-of-the-art ResNet-SB training recipe \cite{wightman2021resnet}. DiDiCM consistently outperforms standard classifiers in top-5 accuracy, with gains increasing under higher uncertainty (see Figure~\ref{fig:main_figure}). For top-1 accuracy, DiDiCM also surpasses standard classifiers in uncertainty-challenging settings, while achieving comparable performance under low-uncertainty conditions (e.g., resolution 224, full or half training data). Our best DiDiCM result, obtained under no-uncertainty conditions, achieves 80.4\% top-1 accuracy, 
% representing the highest reported value for ResNet-50 in the literature \cite{wightman2021resnet}.
which is approximately the highest reported value for ResNet-50 \cite{wightman2021resnet}.

\textbf{Weak Aug vs. Weak Aug}. ~
Next, we compare DiDiCM with standard classifiers using the standard image augmentation pipeline (see Appendix~\ref{app:training}).
DiDiCM consistently achieves higher top-1 and top-5 accuracy across all uncertainty conditions, with performance gains increasing as uncertainty grows, reaching 13.1\% (top-1) and 13.8\% (top-5) accuracy gains at the highest uncertainty level (resolution 56, training ratio 0.25).

\textbf{Weak Aug vs. Strong Aug}. ~
We also compare DiDiCM under the Weak Aug policy with standard classifiers using the state-of-the-art training recipe, referred to as Strong Aug. DiDiCM outperforms standard classifiers under challenging uncertainty conditions, suggesting it is easier to train and can surpass state-of-the-art methods with a simpler training policy.

\textbf{Quality vs. Efficiency}. ~
A notable limitation of DiDiCM compared to standard classifiers is its dependence on sequential model evaluations for label prediction.
To examine this, we analyze the relationship between NFEs and top-1 accuracy for DiDiCM-CP and DiDiCM-CL. As shown in Figure~\ref{fig:nfe}, under a 56 resolution with full training data, DiDiCM-CP reaches near-upper-bound top-1 accuracy with only two diffusion steps, whereas DiDiCM-CL achieves its best performance at 32 NFEs using two diffusion steps and $N=16$ class labels. Table~\ref{tab:nfe} in Appendix~\ref{app:nfe} summarizes results across all uncertainty conditions with these parameters fixed. Under these efficient settings, DiDiCM-CP outperforms standard classifiers with similar accuracy-efficiency trends.

\section{Concluding Remarks}

This paper introduced \emph{Discrete Diffusion Classification Modeling} (DiDiCM), an efficient and high-performing diffusion-based framework for classification. DiDiCM refines class probabilities through a tractable score-based learning process of the Concrete Score.
To balance computational and memory efficiency, we propose two simulation variants: DiDiCM-CP, which operates over class probabilities and offers higher computational efficiency at the cost of increased memory usage, and DiDiCM-CL, which operates over class labels and is thus more memory-efficient.
% Two variant simulation process were proposed to balance computational and memory efficiency: DiDiCM-CP, operating over class probabilities and thus more computational efficient but requires additional memory, and DiDiCM-CL, operating over class labels and thus more memory efficient \TODO{wording}.
Moreover, we presented the \emph{Discrete Diffusion Residual Network} (DiDiRN), a DiDiCM-compatible architecture enabling direct comparison with well-established ResNet benchmarks. Experimental results demonstrate notable gains in top-1 and top-5 accuracy over standard classifiers, even with few diffusion steps.
Future research directions may include exploring more advanced architectural designs to further enhance performance and developing diffusion distillation strategies to improve inference efficiency even further.
% Future research may focus on exploring more advanced architectural designs to further enhance the reported performance and on developing diffusion distillation strategies to improve inference efficiency even further.

\newpage

{
    \small
    \bibliographystyle{ieeenat_fullname}
    \bibliography{main}
}

% WARNING: do not forget to delete the supplementary pages from your submission 
\clearpage
\setcounter{page}{1}
\maketitlesupplementary

% Define theorem environment
\newtheorem{theorem}{Theorem}

\appendix

\section{Background on Transition Matrices}
\label{app:rate_matrices}

In this section, we present the theoretical background and intuition underlying the transition rate matrices used in our diffusion processes. These matrices form the core of our framework, governing the temporal evolution of probabilities through a continuous-time Markov process.

\subsection{Theoretical Analysis of Transition Rates}
\label{app:theoretical_analysis}

We begin by generalizing our diffusion framework to an arbitrary target probability vector. Let $q_0 \in \mathbb{R}^K$ denote the target distribution, and let $q_t \in \mathbb{R}^K$ represent its time-dependent evolution for $t \in [0,1]$. Assume that $q_t$ evolves  according to the linear ordinary differential equation,
\begin{align}
\label{eq:forward_appendix}
    \frac{dq_t}{dt} = R_t q_t ~,
\end{align}
where $R_t \in \mathbb{R}^{K \times K}$ is the transition rate matrix.

Following \citet{anderson2012continuous} and \citet{campbell2022continuous}, the transition rate matrix is defined by,
\begin{align}
    R_t(i,j) := \lim_{\Delta t \rightarrow 0} \frac{q_{t|t-\Delta t}(j|i) - \delta_{i,j}}{\Delta t} ~,
\end{align}
with $q_{t|t-\Delta t}(j|i)$ representing the infinitesimal probability of transitioning from state $i$ at time $t - \Delta t$ to state $j$ at time $t$.
For $i \neq j$, $R_t(i,j)$ thus quantifies the instantaneous rate of transition from state $i$ to state $j$; higher values correspond to faster expected transitions.

By construction, $R_t(i,j) \geq 0$ for $i \neq j$ and $R_t(j,j) = -\sum_{i \ne j} R_t(i,j) \leq 0$, ensuring that each column of $R_t$ sums to zero. Therefore, $R_t$ preserves the total probability mass when applied to $q_t$ and defines a valid Markov process generator.

Suppose $R_t$ takes the simple structured form $R_t = \sigma_t R$, with $R\in\R^{K \times K}$ being a time-invariant transition rate matrix and $\sigma_t$ is a strictly decreasing non-negative scaling function satisfying $\sigma_0 > 0$ and $\sigma_1 = 0$.
Under this assumption, Equation~\eqref{eq:forward_appendix} admits a closed-form solution.
Below is a proof.

\begin{theorem}[Closed-Form Solution for the Discrete Markovian Process]
\label{the:total_forward}
Let $q_t \in \mathbb{R}^K$ satisfy Equation~\eqref{eq:forward_appendix} and let $R = U \Lambda U^{-1}$ be the eigendecomposition of $R$, where $U \in \mathbb{R}^{K \times K}$ denotes the matrix of eigenvectors 
and $\Lambda \in \mathbb{R}^{K \times K}$ is the diagonal matrix of eigenvalues. Define
$\overline{\sigma}_t := \int_0^t \sigma_s \, ds$.
Then,
\begin{align}
\label{eq:total_forward_appendix}
    q_t = U \exp\left( \overline{\sigma}_t \Lambda \right) U^{-1} q_0 ~.
\end{align}
\end{theorem}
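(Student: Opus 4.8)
The plan is to exploit the fact that the family of rate matrices $\{R_t = \sigma_t R\}_{t\in[0,1]}$ pairwise commute, since $R_s R_t = \sigma_s\sigma_t R^2 = R_t R_s$. For a general linear ODE with time-varying coefficients one would need a time-ordered (Dyson) exponential, but under commutativity this collapses to an ordinary matrix exponential of the integrated generator, and that is what makes the stated closed form possible.

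First I would perform the change of variables $\tau := \overline{\sigma}_t = \int_0^t \sigma_s\,ds$. Since $\sigma_t$ is non-negative with $\sigma_0 > 0$, the map $t \mapsto \tau$ is differentiable with $d\tau/dt = \sigma_t$; writing $\tilde{q}(\tau) := q_t$ and applying the chain rule, Equation~\eqref{eq:forward_appendix} becomes the constant-coefficient system $d\tilde{q}/d\tau = R\,\tilde{q}$ with $\tilde{q}(0) = q_0$. Its unique solution is $\tilde{q}(\tau) = \exp(\tau R)\,q_0$, and undoing the substitution gives $q_t = \exp(\overline{\sigma}_t R)\,q_0$.

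Next I would diagonalize. From $R = U\Lambda U^{-1}$ we get $R^k = U\Lambda^k U^{-1}$ for all $k \ge 0$, so the power series defining the matrix exponential yields $\exp(\overline{\sigma}_t R) = \sum_{k\ge 0} \frac{\overline{\sigma}_t^{\,k}}{k!} R^k = U\left(\sum_{k\ge 0} \frac{\overline{\sigma}_t^{\,k}}{k!}\Lambda^k\right)U^{-1} = U\exp(\overline{\sigma}_t\Lambda)U^{-1}$, which is exactly Equation~\eqref{eq:total_forward_appendix}. I would then verify consistency at the endpoints: $\overline{\sigma}_0 = 0$ gives $\exp(0) = I$ and hence $q_0 = q_0$; and differentiating $U\exp(\overline{\sigma}_t\Lambda)U^{-1}q_0$ in $t$ term by term (justified by locally uniform convergence of the series and boundedness of $\overline{\sigma}_t$ and $\sigma_t$ on $[0,1]$) returns $\sigma_t\,U\Lambda\exp(\overline{\sigma}_t\Lambda)U^{-1}q_0 = \sigma_t R\,q_t$, confirming the ODE.

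The step I expect to require the most care is the justification that the time-varying linear ODE is solved by a plain matrix exponential rather than a time-ordered product — the commutativity $R_s R_t = R_t R_s$ is precisely what licenses this, and it should be stated explicitly. A minor secondary point is that the statement presupposes $R$ is diagonalizable so that $R = U\Lambda U^{-1}$ exists; if one dropped this assumption the argument would go through verbatim with $\exp(\overline{\sigma}_t R)$ in place of $U\exp(\overline{\sigma}_t\Lambda)U^{-1}$.
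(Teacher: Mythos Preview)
Your proof is correct and arrives at the same conclusion as the paper, with the diagonalization step via the power series of the matrix exponential being essentially identical. The one genuine difference is in how you reach the intermediate formula $q_t = \exp(\overline{\sigma}_t R)\,q_0$: the paper proceeds by ansatz, positing this form and then verifying it satisfies the ODE by differentiating and invoking that $\exp(\overline{\sigma}_t R)$ and $R_t = \sigma_t R$ commute (both being functions of $R$); you instead derive it constructively by the time reparametrization $\tau = \overline{\sigma}_t$, reducing to the autonomous system $d\tilde{q}/d\tau = R\tilde{q}$ whose solution is immediate. Your route is arguably cleaner because it explains \emph{why} the ordinary exponential appears rather than just checking it, and your explicit remark that commutativity of the $R_t$'s is what collapses the time-ordered exponential is a nice conceptual addition not spelled out in the paper. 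Both arguments are standard and equally rigorous here.
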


\begin{proof}
Assume \( q_t = \exp(\overline{\sigma}_t R) q_0 \).
We first show that this form satisfies the given differential equation.

Differentiating with respect to \( t \),
\begin{align}
\frac{dq_t}{dt} 
    &= \frac{d}{dt} \left( \exp(\overline{\sigma}_t R) \right) q_0 \\
    &= \exp(\overline{\sigma}_t R) \frac{d}{dt}(\overline{\sigma}_t R) q_0 \\
    &= \exp(\overline{\sigma}_t R) \sigma_t R q_0 = \exp(\overline{\sigma}_t R) R_t q_0 ~.
\end{align}

Since the matrices $\exp(\overline{\sigma}_t R)$ and $R_t$ are polynomials in $R$, they are diagonalizable and therefore commute.
% Since the transition rate matrix $R_t$ is symmetric by definition, the matrix $\exp(\overline{\sigma}_t R)$ is also symmetric, and therefore both are associative.
It then follows that,
\begin{align}
    \frac{dq_t}{dt} = \exp(\overline{\sigma}_t R) R_t q_0 = R_t \exp(\overline{\sigma}_t R) q_0 = R_t q_t ~,
\end{align}
thus confirming the assumption made,
\begin{align}
    q_t = \exp(\overline{\sigma}_t R) q_0 ~.
\end{align}

Additionally, Appendix E of \citet{campbell2022continuous} establishes that,
% \begin{align}
%     \exp(\overline{\sigma}_t R) = U \exp\left( \overline{\sigma}_t \Lambda \right) U^{-1}.
% \end{align}
\begin{align}
    \exp(\overline{\sigma}_t R) 
    &= \sum_{k=0}^{\infty} \frac{1}{k!} \left( \overline{\sigma}_t R \right)^k \\
    &= \sum_{k=0}^{\infty} \frac{1}{k!} \left( U \Lambda U^{-1} \overline{\sigma}_t \right)^k \\
    &= \sum_{k=0}^{\infty} \frac{1}{k!} U \left( \Lambda \overline{\sigma}_t \right)^k U^{-1} \\
    &= U \left\{ \sum_{k=0}^{\infty} \frac{1}{k!} \left( \Lambda \overline{\sigma}_t \right)^k \right\} U^{-1} \\
    &= U \exp\left( \overline{\sigma}_t \Lambda \right) U^{-1} ~.
\end{align}
Therefore, the stated solution follows.
\end{proof}

When $R_t$ does not adopt the form $\sigma_t R$, for instance, in the case of the reversal rate matrix $\overline{R}_t$ introduced in Section~\ref{sec:reverse}, an approximate solution can be obtained via Euler discretization with sufficiently small time increment $\Delta t > 0$.
In this approach, the infinitesimal transition matrix is defined as
$Q_t := I + R_t \Delta t$.
The probability distribution is then updated according to $q_{t+\Delta t} \approx Q_t q_t$,
where each entry satisfies $Q_t(i,j) \geq 0$, and the columns of $Q_t$ sum to one, thereby ensuring conservation of total probability mass when applied to $q_t$.
Below is a proof for this solution.

\begin{theorem}[Approximated Solution for the Discrete Markovian Process]
\label{the:matrix-euler}
Let $q_t \in \mathbb{R}^K$ satisfy Equation~\eqref{eq:forward_appendix}.
For sufficiently small time increment $\Delta t > 0$, the first-order Euler approximation yields
\begin{align}
    q_{t + \Delta t} = Q_t q_t + \mathcal{O}(\Delta t^2) ~,
\end{align}
where $\mathcal{O}(\Delta t^2)$ denotes a remainder term that vanishes quadratically as $\Delta t \to 0$.
\end{theorem}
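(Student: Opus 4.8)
The plan is to establish the claim by a second-order Taylor expansion of $q_t$ in the time variable, using Equation~\eqref{eq:forward_appendix} to eliminate the first derivative. First I would record the regularity that is needed. Granting the mild smoothness assumptions in force throughout (a smooth noise schedule, hence a $C^1$ dependence $t \mapsto R_t$ of the rate matrix, and likewise for the reversal matrix $\overline{R}_t$ built from a smooth score), the map $t \mapsto R_t q_t$ is $C^1$ on the relevant compact time interval, so the solution $q_t$ of Equation~\eqref{eq:forward_appendix} is twice continuously differentiable there. Being $C^2$ on a compact interval, $q_t''$ is bounded, say $\|q_s''\| \le M$ for all admissible $s$; this is the only place where smoothness of $\sigma_t$ and the probability-preserving structure of $R_t$ (which keeps $q_t$ inside the simplex, hence bounded) are actually invoked.

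Next I would apply Taylor's theorem with the integral form of the remainder, expanded about the base point $t$:
\[
q_{t+\Delta t} = q_t + \Delta t\, q_t' + \int_t^{t+\Delta t} (t + \Delta t - s)\, q_s''\, ds ~.
\]
Substituting $q_t' = R_t q_t$ from Equation~\eqref{eq:forward_appendix} collapses the first two terms into $(I + R_t \Delta t)\, q_t = Q_t q_t$, which is exactly the Euler update $Q_t := I + R_t\Delta t$ defined just before the theorem. It then only remains to control the remainder: by the triangle inequality together with $\|q_s''\| \le M$,
\[
\left\| \int_t^{t+\Delta t} (t + \Delta t - s)\, q_s''\, ds \right\| \le M \int_t^{t+\Delta t} (t + \Delta t - s)\, ds = \frac{M}{2}\, \Delta t^2 ~,
\]
so $q_{t+\Delta t} = Q_t q_t + \mathcal{O}(\Delta t^2)$, as stated. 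I would close by noting that once $\Delta t$ is small enough that $1 + R_t(j,j)\Delta t \ge 0$ for every $j$, the matrix $Q_t$ has nonnegative entries and columns summing to one, hence maps probability vectors to probability vectors and is a legitimate transition matrix.

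The algebra here is routine; the only real care is in the regularity bookkeeping, and that is also where I expect the main obstacle to lie. Specifically, one must ensure the remainder estimate is uniform in the base point $t$, i.e.\ that $q_t$ is genuinely $C^2$ with $\|q_t''\|$ bounded by a constant independent of $t$ on $[0,1]$ (or on the subinterval in play for the reverse process). For the forward matrix this is immediate since $q_t'' = \sigma_t' R q_t + \sigma_t^2 R^2 q_t$ with $\sigma_t,\sigma_t'$ continuous on $[0,1]$ and $q_t$ bounded; for the reversal matrix $\overline{R}_t$ one additionally uses that $q(\rc_t|\rvy)$ stays strictly positive for $t>0$ so that the score ratios $S_t$ are smooth. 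Granting these points, the statement follows immediately from the Taylor estimate above.
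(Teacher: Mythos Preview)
Your proof is correct and follows essentially the same approach as the paper: both derive the Euler step from a first-order Taylor expansion of $q_t$ around $t$, substituting $q_t' = R_t q_t$ from Equation~\eqref{eq:forward_appendix} and collecting the remainder as $\mathcal{O}(\Delta t^2)$. Your version is more careful than the paper's, which simply invokes the definition of the derivative to write $(q_{t+\Delta t}-q_t)/\Delta t = R_t q_t + \mathcal{O}(\Delta t)$ and rearranges; your use of the integral remainder and explicit $C^2$ bookkeeping makes the constant in the $\mathcal{O}(\Delta t^2)$ bound and its uniformity in $t$ precise, but the underlying idea is identical.
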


\begin{proof}
By the definition of the time derivative, we have,
\[
    \frac{dq_t}{dt} = \lim_{\Delta t \to 0} \frac{q_{t+\Delta t} - q_t}{\Delta t} = R_t q_t.
\]
Consequently, for sufficiently small \( \Delta t > 0 \),
\begin{align}
    \frac{q_{t+\Delta t} - q_t}{\Delta t} &= R_t q_t + \mathcal{O}(\Delta t), \\
    q_{t+\Delta t} - q_t &= R_t q_t \, \Delta t + \mathcal{O}(\Delta t^2), \\
    q_{t+\Delta t} &= \bigl( I + R_t \Delta t \bigr) q_t + \mathcal{O}(\Delta t^2) \\
    q_{t+\Delta t} &= Q_t q_t + \mathcal{O}(\Delta t^2) ~.
\end{align}
\end{proof}

\subsection{Intuition of the Forward Transition Rates}
\label{app:forward}

The forward process introduced in Section~\ref{sec:forward} relies on a specific structure of the transition rate matrix $R_t$, defined as $R_t = \sigma_t R$, where $R := \mathbf{1}\mathbf{1}^T - K I \in \mathbb{R}^{K \times K}$.
This form, referred to as the uniform transition rate matrix \cite{anderson2012continuous,campbell2022continuous}, and can be written explicitly as
\begin{align}
    R_t = \sigma_t \begin{bmatrix}
        1-K & 1 & \cdots & 1 \\
        1 & 1-K & \cdots & 1 \\
        \vdots & \vdots & \ddots & \vdots \\
        1 & 1 & \cdots & 1-K
    \end{bmatrix} ~.
\end{align}
Intuitively, at each time $t$, this matrix transitions the probability vector to the uniform distribution, controlled by the scale factor $\sigma_t$.

For illustration, consider $K=3$, $\sigma_t=0.2$, and $q_t = [0.8, 0.1, 0.1]^T$. Then,
\begin{align}
    \nonumber
    \frac{dq_t}{dt} &= R_t q_t \\
    &= \begin{bmatrix}
        -0.4 & 0.2 & 0.2 \\
        0.2 & -0.4 & 0.2 \\
        0.2 & 0.2 & -0.4
    \end{bmatrix}
    \begin{bmatrix}
        0.8 \\ 0.1 \\ 0.1
    \end{bmatrix}
    = \begin{bmatrix}
        -0.28 \\ 0.14 \\ 0.14
    \end{bmatrix} ~.
\end{align}
Now, increasing $\sigma_t$ to $2.5$ yields,
\begin{align}
    \nonumber
    \frac{dq_t}{dt} &= R_t q_t \\
    &= \begin{bmatrix}
        -5 & 2.5 & 2.5 \\
        2.5 & -5 & 2.5 \\
        2.5 & 2.5 & -5
    \end{bmatrix}
    \begin{bmatrix}
        0.8 \\ 0.1 \\ 0.1
    \end{bmatrix}
    = \begin{bmatrix}
        -3.5 \\ 1.75 \\ 1.75
    \end{bmatrix} ~.
\end{align}

From these computations, we observe that larger values of $\sigma_t$ lead to faster transitions, causing the probability mass to spread more quickly across the discrete states.
The uniform structure of $R_t$ guarantees equal transition potential among states, meaning that states with higher source probabilities will experience proportionally higher transition rates than others.

\subsection{Intuition of the Reverse Transition Matrices}
\label{app:reverse}

Recalling the reverse process introduced in Equation~\eqref{eq:reverse ode},
\begin{align}
    \nonumber
    \frac{d p_{1-t}}{d t}
    &= \overline{R}_{1-t} \, p_{1-t}, \\
    \text{s.t} \quad
    \overline{R}_t
    &:= S_t \odot R_t - \operatorname{diag} \!\left( \mathbf{1}^T (S_t \odot R_t) \right) ~,
\end{align}
where, for simplicity, we denote $p(\rc_t|\rvy)$ as $p_t$.
For a sufficiently small time step $\Delta t > 0$, applying the Euler approximation from Theorem~\ref{the:matrix-euler} yields,
\begin{align}
    p_{t-\Delta t} \approx \overline{Q}_t p_t ~,
\end{align}
where $\overline{Q}_t := I + \overline{R}_t \Delta t$ represents the infinitesimal transition matrix.

Consider a simple example where $\overline{R}_t = \sigma_t R$ is a uniform transition rate matrix with parameters $K = 3$, $\sigma_t = 0.2$, $\Delta t = 0.1$, and $p_t = [0.8, 0.1, 0.1]^T$.
Then,
\begin{align}
    p_{t - \Delta t}
    &\approx (I + 0.1 \, \overline{R}_t) \, p_t \nonumber \\
    &=
    \begin{bmatrix}
        0.96 & 0.02 & 0.02 \\
        0.02 & 0.96 & 0.02 \\
        0.02 & 0.02 & 0.96
    \end{bmatrix}
    \begin{bmatrix}
        0.8 \\ 0.1 \\ 0.1
    \end{bmatrix}
    =
    \begin{bmatrix}
        0.774 \\ 0.113 \\ 0.113
    \end{bmatrix} ~.
\end{align}

If we increase $\sigma_t$ to 2.5, we obtain
\begin{align}
    p_{t - \Delta t}
    &\approx (I + 0.1 \, \overline{R}_t) \, p_t \nonumber \\
    &=
    \begin{bmatrix}
        0.5 & 0.25 & 0.25 \\
        0.25 & 0.5 & 0.25 \\
        0.25 & 0.25 & 0.5
    \end{bmatrix}
    \begin{bmatrix}
        0.8 \\ 0.1 \\ 0.1
    \end{bmatrix}
    =
    \begin{bmatrix}
        0.5 \\ 0.25 \\ 0.25
    \end{bmatrix}.
\end{align}

A larger $\sigma_t$ therefore induces stronger state transitions, driving the distribution more rapidly toward uniformity in a single step.
In general, the scaling factor $\sigma_t$ governs the sharpness of the reverse diffusion: small values yield smoother, more stable refinements of the probabilities, while large values accelerate convergence but risk oversmoothing finer distinctions in $p_t$.
% The elementwise modulation by $S_t$ further refines these transitions locally, gating the effective transition intensity according to the posterior information carried by the conditioning variable $\rvy$.

\begin{table*}[htb]
\centering
\small
\begin{tabular}{c|c|c||ccc|ccc|ccc}
\toprule
\multirow{2}{*}{\textbf{Method}} & \multirow{2}{*}{\shortstack{\textbf{Label Selection}\\\textbf{ Method}}} & \multirow{2}{*}{\textbf{Metric}} & \multicolumn{3}{c|}{\textbf{Training Ratio - 25\%}} & \multicolumn{3}{c|}{\textbf{Training Ratio - 50\%}} & \multicolumn{3}{c}{\textbf{Training Ratio - 100\%}} \\
 &  &  & 56 & 112 & 224 & 56 & 112 & 224 & 56 & 112 & 224 \\
\midrule
\multirow{6}{*}{\shortstack{\textbf{Ours:}\\\textbf{DiDiCM-CP}\\$1/\Delta t = 8$}} 
 & \multirow{2}{*}{ArgMax} & Top-1 
& 56.58 & 63.35 & 69.17 & 65.91 & 72.12 & 77.00 & 69.41 & 76.20 & 79.46 \\
 && Top-5 & 80.99 & 85.89 & 88.61 & 87.48 & 90.75 & 92.85 & 89.89 & 92.88 & 94.72 \\
 & \multirow{2}{*}{Sampling} & \cellcolor{blue!10}Top-1 & \cellcolor{blue!10}58.89 & \cellcolor{blue!10}65.33 & \cellcolor{blue!10}69.78 & \cellcolor{blue!10}\textbf{68.34} & \cellcolor{blue!10}73.38 & \cellcolor{blue!10}\textbf{77.62} & \cellcolor{blue!10}\textbf{71.86} & \cellcolor{blue!10}76.84 & \cellcolor{blue!10}80.27 \\
 &  & \cellcolor{blue!10}Top-5 & \cellcolor{blue!10}\textbf{81.98} & \cellcolor{blue!10}86.74 & \cellcolor{blue!10}89.24 & \cellcolor{blue!10}\textbf{88.80} & \cellcolor{blue!10}91.51 & \cellcolor{blue!10}\textbf{93.79} & \cellcolor{blue!10}\textbf{90.97} & \cellcolor{blue!10}93.53 & \cellcolor{blue!10}95.20 \\
 
 & \multirow{2}{*}{ArgMin} 
 & \cellcolor{green!10}Top-1 & \cellcolor{green!10}\textbf{59.05} & \cellcolor{green!10}\textbf{68.83} & \cellcolor{green!10}\textbf{72.27} & \cellcolor{green!10}65.75 & \cellcolor{green!10}\textbf{73.67} & \cellcolor{green!10}77.01 & \cellcolor{green!10}70.80 & \cellcolor{green!10}\textbf{77.89} & \cellcolor{green!10}\textbf{80.40} \\
 &  & \cellcolor{green!10}Top-5 & \cellcolor{green!10}\textbf{81.93} & \cellcolor{green!10}\textbf{88.76} & \cellcolor{green!10}\textbf{91.00} & \cellcolor{green!10}86.72 & \cellcolor{green!10}\textbf{91.63} & \cellcolor{green!10}93.62 & \cellcolor{green!10}89.69 & \cellcolor{green!10}\textbf{93.75} & \cellcolor{green!10}\textbf{95.29} \\
\bottomrule
\end{tabular}
\caption{Ablation study of label selection strategies in DiDiCM-CP under varying uncertainty conditions.}
\label{tab:selection}
\end{table*}

\section{Ablation Study of Label Selection Strategies in DiDiCM-CP}
\label{app:selection}

In Section~\ref{sec:pd-ddcm}, we introduced DiDiCM-CP, our diffusion-based method over class probabilities, summarized in Algorithm~\ref{alg:didicm-pd}. 
This method takes advantage of the tractable prior distribution \( P(\rc) \) in classification settings, enabling the computation of the complete score matrix \( S_t^\theta \) within a single model iteration. 
This is achieved by normalizing the Concrete Score prediction, defined as 
\( s_\theta(\rvy, j, t) \approx q(\rc_t | \rvy) / q(\rc_t = j | \rvy) \), 
where \( j \in \{1, \dots, K\} \) represents the noisy class label corresponding to noise level \( t \), and \( \rvy \) denotes the input to be classified.

By construction, any label \( j \) can be used in this process. However, we find empirically that the choice of \( j \) can influence model performance. 
In this section, we therefore examine several strategies for selecting \( j \).

We evaluate three label selection strategies:
\begin{enumerate}
    \item \textbf{ArgMax}: Select the class label that \emph{maximizes} the current noisy posterior distribution, \( j = \arg \max p_\theta(\rc_t|\rvy) \).
    \item \textbf{Sampling}: Sample the class label from the current noisy posterior distribution, \( j \sim p_\theta(\rc_t|\rvy) \).
    \item \textbf{ArgMin}: Select the class label that \emph{minimizes} the current noisy posterior distribution, \( j = \arg \min p_\theta(\rc_t|\rvy) \).
\end{enumerate}

Table~\ref{tab:selection} reports the performance of DiDiCM-CP under these three label selection methods. 
For each, we evaluate nine uncertainty conditions, varying image corruption and data scarcity levels, consistent with the experimental setup presented in Section~\ref{sec:exp}.

Interestingly, across most uncertainty conditions, the ArgMin strategy yields the best performance. 
We hypothesize that this approach, which selects the least confident class label under the current noisy posterior \( p_\theta(\rc_t|\rvy) \), appears to encourage the model to more thoroughly explore the space of possible classes and to refine its decision boundaries based on perceptual features rather than prior confidence.

The Sampling strategy performs second best, occasionally surpassing ArgMin in certain moderate-uncertainty scenarios, but showing less stability under high-uncertainty conditions, when only 25\% of the training data is available. 

Finally, the ArgMax strategy consistently performs worst. 
We attribute this to the bias it introduces: by repeatedly reinforcing the most confident class label, the model may overfit to its initial predictions and fail to adequately re-examine alternative interpretations of the input image.

\section{Visualizing DiDiCM Using Misclassified Images}
\label{app:bad}

In this section, we visualize the top-5 class probabilities for randomly selected images on which both DiDiCM and the standard classifier produce incorrect Top-1 predictions. The corresponding illustrations are provided in Figure~\ref{fig:bad}.

Figure~\ref{fig:bad} presents 10 randomly sampled images in which both the classification approaches bring incorrect prediction. For each image, we display the sorted top-5 class probabilities obtained from DiDiCM (based on DiDiRN-50) and from a standard classifier (based on ResNet-50). Both models were trained using our Strong Aug training recipe, detailed in Appendix~\ref{app:training}, which represents the state-of-the-art training procedure for ResNet-50~\cite{wightman2021resnet}.

We observe that in most cases, the standard classifier exhibits overconfident Top-1 probability estimates for the incorrect class labels. In contrast, DiDiCM produces a more balanced probability distribution across plausible class options. For instance, in image \#8, depicting a spider web with a car mirror in the background, the true label is "spider web". While the standard classifier assigns nearly 100\% confidence to the "car mirror" label, DiDiCM appropriately allocates substantial probability mass to the "spider web" class as well, reflecting a more calibrated representation of the posterior distribution.

\section{Empirical Study of DiDiRN Model Size}
\label{app:all_resnet}

In this section, we provide additional experimental results evaluating the performance of DiDiCM compared to standard classification models across various model sizes, leveraging the DiDiRN architectural design based on ResNet models.

The proposed DiDiRN architecture is built on top of the ResNet framework to enable direct comparison with corresponding baseline models, as described in Section~\ref{sec:didirn} and illustrated in Figure~\ref{fig:didirn}.

Table~\ref{tab:all_resnet} summarizes the full results, while Figure~\ref{fig:all_resnet} visualizes the corresponding accuracy gains. We focus on the most challenging uncertainty scenario from our empirical study in Section~\ref{sec:exp}, in which the input image resolution is 56 and only 25\% of the data are available for training. Under this setting, we train DiDiRN models corresponding to standard ResNet sizes, i.e., DiDiRN-18 through DiDiRN-152.
We then evaluate and compare the Top-1 and Top-5 accuracies of the DiDiRN variants with their baseline ResNet counterparts.

\begin{table}[t]
\centering
\small
\begin{tabular}{c|c|c||cc}
\toprule
\textbf{Method} & \textbf{Model} & \shortstack{\textbf{Params}\\\textbf{(M)}} & \shortstack{\textbf{Top-1}\\(\%)} & \shortstack{\textbf{Top-5}\\(\%)} \\
\midrule
\multirow{5}{*}{\shortstack{Standard\\Classifiers}}
                         & ResNet-18   & 11.7 & 43.46 & 67.70 \\
                         & ResNet-34   & 21.8  & 47.41 & 71.65 \\
                         & ResNet-50   & 25.6  & 53.77 & 76.03 \\
                         & ResNet-101  & 44.5  & 54.33 & 76.35 \\
                         & ResNet-152  & 60.2  & 54.20 & 76.00 \\
\midrule
\multirow{5}{*}{\shortstack{\textbf{Ours:}\\\textbf{DiDiCM-CP}\\$1/\Delta t = 8$}}  
                         & DiDiRN-18   & 12.1  & \textbf{49.13} & \textbf{74.01} \\
                         & DiDiRN-34   & 22.6  & \textbf{52.50} & \textbf{77.09} \\
                         & DiDiRN-50   & 27.7  & \textbf{59.05} & \textbf{81.93} \\
                         & DiDiRN-101  & 49.0  & \textbf{61.86} & \textbf{83.95} \\
                         & DiDiRN-152  & 66.6  & \textbf{62.95} & \textbf{84.61} \\

\bottomrule
\end{tabular}
\caption{\textbf{ImageNet Top-1 and Top-5 accuracy across model sizes.}
Comparison of DiDiCM-CP (8 steps) and standard classifiers at input resolution 56, trained on 25\% of the available data. DiDiCM consistently outperforms standard classifiers across all model sizes.}
\label{tab:all_resnet}
\end{table}

\begin{figure}[t]
    \centering
    \includegraphics[width=\columnwidth]{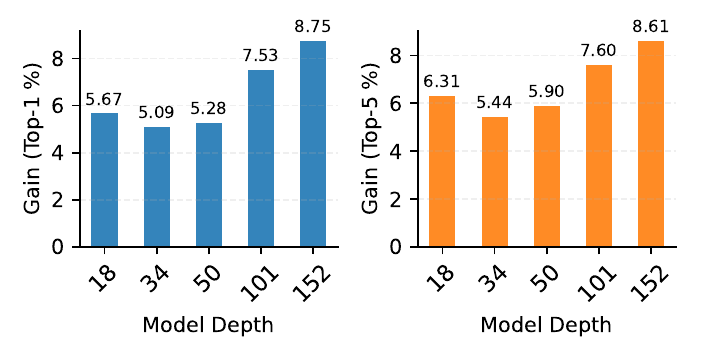}
    \caption{\textbf{ImageNet Top-1 and Top-5 accuracy gains across model sizes}. Comparison of DiDiCM-CP (8-step) and standard classifiers at 56 input resolution, trained on 25\% of the data.}
    \label{fig:all_resnet}
\end{figure}

\begin{table}[t]
\centering
\small
\begin{tabular}{l||cc}
\toprule
\textbf{Hyperparameter} & \textbf{Weak Aug} & \textbf{Strong Aug} \\
\midrule
\multicolumn{3}{c}{\textit{Optimization}} \\
\midrule
Epochs                & 600 & 600 \\
Warmup epochs         & 5 & 5 \\
Batch size            & 1024 & 2048 \\
Optimizer             & LAMB & LAMB \\
Learning rate         & 2$\times$10$^{-3}$ & 3.5$\times$10$^{-3}$ \\
Weight decay          & 0.1 & 0.1 \\
Mixed precision       & v & v \\
\midrule
\multicolumn{3}{c}{\textit{Image Augmentation}} \\
\midrule
Repeated augmentation & - & 3 \\
Horizontal flip       & v & v \\
Random resized crop   & v & v \\
RandAugment           & - & 7 / 0.5 \\
Color jitter          & 0.4 & - \\
Mixup $\alpha$        & - & 0.2 \\
CutMix $\alpha$       & - & 1.0 \\
Label smoothing $\epsilon$ & - & 0.1 \\
\midrule
\multicolumn{3}{c}{\textit{Evaluation}} \\
\midrule
Test crop ratio       & 0.875 & 0.875 \\
\bottomrule
\end{tabular}
\caption{Details of the two training recipes used for the DiDiCM experiments. Both are variants of the A1 recipe from ResNet-SB \cite{wightman2021resnet}. \textbf{Weak Aug} incorporates standard image augmentations as implemented in PyTorch \cite{paszke2019pytorch}, while \textbf{Strong Aug} follows the original A1 procedure without modifications.}
\label{tab:aug}
\end{table}

As shown in Table~\ref{tab:all_resnet}, DiDiRN consistently outperforms standard classifiers across all model sizes. Figure~\ref{fig:all_resnet} further indicates that accuracy gains in Top-1 and Top-5 metrics are approximately consistent for each model scale.
The largest observed improvement occurs with DiDiRN-152, which achieves an accuracy gain of $\sim$8.7\% over ResNet-152 in both Top-1 and Top-5 metrics. This finding suggests that scaling up DiDiRN architectures has the potential to yield higher performance than conventional classification models.

\begin{figure*}[t]
    \centering
    \includegraphics[width=\textwidth]{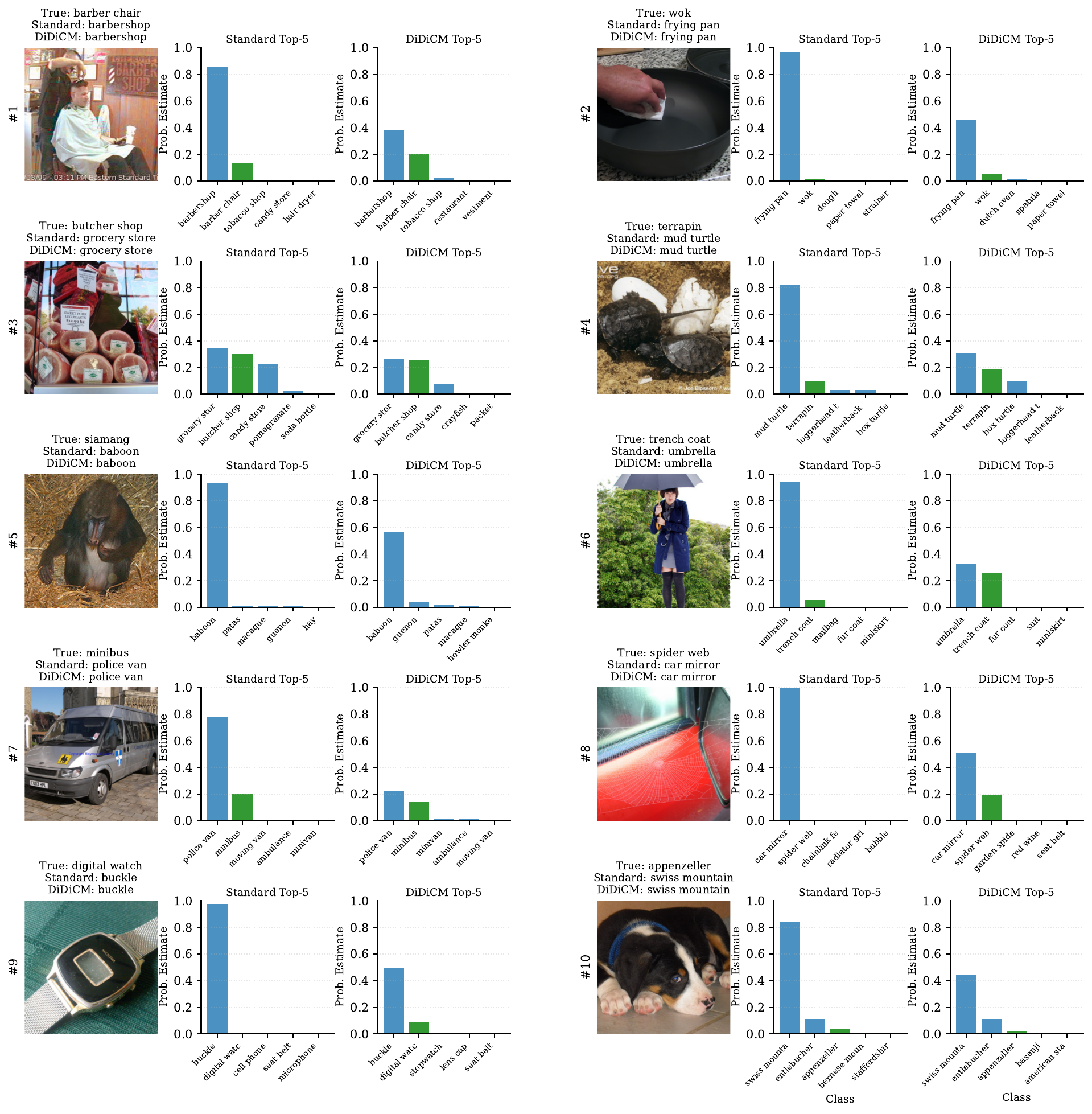}
    \caption{Random images of misclassifications by both DiDiCM (DiDiRN-50) and the standard classifier (ResNet-50), trained with the Strong Aug recipe. The results indicate that while the standard classifier tends toward overconfidence, DiDiCM yields more balanced prediction probabilities.}
    \label{fig:bad}
\end{figure*}

\section{Technical Training Details}
\label{app:training}

In this section we provide additional technical details of all the experiments in this paper.
% Table~\ref{tab:aug} summarizes the details below.
In Table \ref{tab:aug}, we summarize the training details relevant only to our DiDiCM experiments.

\textbf{Architecture}. ~
Most of our experiments are conducted using ResNet-50. For DiDiCM, we construct the corresponding DiDiRN-50 model. In Appendix~\ref{app:all_resnet}, we extend our empirical study with additional experiments across all standard ResNet model sizes.

\textbf{Dataset}. ~
All experiments were conducted on the challenging ImageNet-1k (ILSVRC-2012) dataset \cite{russakovsky2015imagenet}.
Image normalization was applied consistently during both training and evaluation.
For training, we applied the image augmentations described below. For evaluation, we used the standard center crop ratio of 0.875.

\textbf{Image Augmentations}. ~
We evaluate model performance under two augmentation regimes.
The first, referred to as \emph{Weak Aug}, consists of the standard image augmentations used for the ImageNet dataset as implemented in PyTorch \cite{paszke2019pytorch}.
These include random horizontal flipping, random resized cropping, and ColorJitter.
The second, denoted by \emph{Strong Aug}, employs a more advanced, state-of-the-art augmentation pipeline for ResNet models, following ResNet-SB \cite{wightman2021resnet}.
It includes RandAugment \cite{cubuk2020randaugment} with repeated augmentation enabled, Mixup \cite{zhang2017mixup} and CutMix \cite{yun2019cutmix} with label smoothing, and the standard random horizontal flipping and random resized cropping.

\begin{table*}[htb]
\centering
\small
\begin{tabular}{c|c|c|c||ccc|ccc|ccc}
\toprule
\multirow{2}{*}{\textbf{Method}} & \multirow{2}{*}{\textbf{NFEs}} & \multirow{2}{*}{\textbf{Memory}} & \multirow{2}{*}{\textbf{Metric}} & \multicolumn{3}{c|}{\textbf{Training Ratio - 25\%}} & \multicolumn{3}{c|}{\textbf{Training Ratio - 50\%}} & \multicolumn{3}{c}{\textbf{Training Ratio - 100\%}} \\
 &  &  &  & 56 & 112 & 224 & 56 & 112 & 224 & 56 & 112 & 224 \\
\midrule
\multirow{2}{*}{\shortstack{Standard\\Classifiers}} 
 & \multirow{2}{*}{1} & \multirow{2}{*}{-} & Top-1 
& 53.77 & 66.51 & 71.85 & 60.51 & 72.31 & \textbf{77.05} & 64.71 & 75.96 & \textbf{80.42} \\
 &&& Top-5 & 76.03 & 85.41 & 88.72 & 81.74 & 89.81 & 92.64 & 85.32 & 92.30 & 94.60 \\
\multirow{2}{*}{DiDiCM-CL} 
 & \multirow{2}{*}{$N\frac{1}{\Delta t}$} & \multirow{2}{*}{$\mathcal{O}\!(K$+$N)$} & \cellcolor{blue!10}Top-1 & \cellcolor{blue!10}56.96 & \cellcolor{blue!10}67.15 & \cellcolor{blue!10}70.78 & \cellcolor{blue!10}64.51 & \cellcolor{blue!10}72.58 & \cellcolor{blue!10}76.20 & \cellcolor{blue!10}68.74 & \cellcolor{blue!10}76.86 & \cellcolor{blue!10}79.77 \\
 &  &  & \cellcolor{blue!10}Top-5 & \cellcolor{blue!10}72.27 & \cellcolor{blue!10}81.61 & \cellcolor{blue!10}83.83 & \cellcolor{blue!10}79.57 & \cellcolor{blue!10}85.60 & \cellcolor{blue!10}88.58 & \cellcolor{blue!10}83.17 & \cellcolor{blue!10}88.89 & \cellcolor{blue!10}91.51 \\
 
\multirow{2}{*}{DiDiCM-CP} 
 & \multirow{2}{*}{$\frac{1}{\Delta t}$} & \multirow{2}{*}{$\mathcal{O}\!(K^2)$} & \cellcolor{green!10}Top-1 & \cellcolor{green!10}\textbf{58.95} & \cellcolor{green!10}\textbf{68.71} & \cellcolor{green!10}\textbf{72.10} & \cellcolor{green!10}\textbf{65.57} & \cellcolor{green!10}\textbf{73.57} & \cellcolor{green!10}76.78 & \cellcolor{green!10}\textbf{69.84} & \cellcolor{green!10}\textbf{77.63} & \cellcolor{green!10}80.15 \\
 &  &  & \cellcolor{green!10}Top-5 & \cellcolor{green!10}\textbf{81.80} & \cellcolor{green!10}\textbf{88.64} & \cellcolor{green!10}\textbf{90.89} & \cellcolor{green!10}\textbf{86.60} & \cellcolor{green!10}\textbf{91.57} & \cellcolor{green!10}\textbf{93.56} & \cellcolor{green!10}\textbf{89.21} & \cellcolor{green!10}\textbf{93.81} & \cellcolor{green!10}\textbf{95.23} \\
\bottomrule
\end{tabular}
\caption{\textbf{Quality-Efficiency Analysis}.
ImageNet top-1 and top-5 accuracy of DiDiCM under efficient configurations ($1/\Delta t = 2$, $N = 16$ for DiDiCM-CL). Results compare DiDiCM-CP (2 NFEs) and DiDiCM-CL (32 NFEs) against standard classification (1 NFE).}
\label{tab:nfe}
\end{table*}

\textbf{Optimization}. ~
All models were trained using the LAMB optimizer \cite{you2019large} with a cosine annealing learning rate schedule and a weight decay of 0.1. Training was conducted for 600 epochs, with the first 5 epochs designated for linear warmup.
For experiments employing Weak Aug, we used a batch size of 1024 and a base learning rate of 2$\times$10$^{-3}$. The learning rate was scaled according to the training data ratio using the square-root scaling rule, $\text{lr} = \text{base} \times \sqrt{1/\text{ratio}}$, to mitigate overfitting.
For standard classification experiments with Strong Aug, we followed ResNet-SB \cite{wightman2021resnet} by using a batch size of 2048 and a base learning rate of 5$\times$10$^{-3}$. The learning rate was scaled according to the same policy described above.
For DiDiCM experiments with Strong Aug, we also used a batch size of 2048 but with a base learning rate of 3.5$\times$10$^{-3}$. For these experiments, learning rate scaling based on the data ratio was omitted, as we found it to be non-beneficial for performance in contrast for standard classifiers performance.

\section{The Quality-Efficiency Tradeoff}
\label{app:nfe}

In this section, we evaluate the top-1 and top-5 accuracy of DiDiCM under efficient configuration settings, considering both DiDiCM-CP and DiDiCM-CL. Specifically, we set the diffusion steps to $\frac{1}{\Delta t} = 2$ and, for DiDiCM-CL, the number of class labels to be averaged to $N = 16$. These settings correspond to 2 NFEs for DiDiCM-CP and 32 NFEs for DiDiCM-CL. Notably, DiDiCM-CL is more memory-efficient than DiDiCM-CP, providing a viable alternative in resource-constrained scenarios. Full results are reported in Table~\ref{tab:nfe}.

Consistent with our observations using 8 diffusion steps in Table~\ref{tab:main_results}, DiDiCM-CP consistently outperforms standard classifiers in top-5 accuracy, with performance gains increasing under higher uncertainty. For top-1 accuracy, DiDiCM also surpasses standard classifiers in settings that challenge uncertainty estimation. In low-uncertainty conditions (resolution 224, full and half training data), standard classifiers slightly outperform DiDiCM configured under this efficient setting.

While DiDiCM-CL is more memory-efficient than DiDiCM-CP, we find that its efficient configuration with 32 NFEs achieves performance comparable to DiDiCM-CP, with an accuracy difference of $\sim$1\%. This suggests that DiDiCM-CL can serve as a practical, memory-conscious alternative without substantial sacrifice in accuracy.

\end{document}